\def\eqref#1{(\ref{#1})}
\def\1{\bm{1}}
\def\vzero{{\bm{0}}}
\def\ve{{\bm{e}}}
\def\vu{{\bm{u}}}
\def\vv{{\bm{v}}}
\def\vw{{\bm{w}}}
\def\vx{{\bm{x}}}
\def\vy{{\bm{y}}}
\def\mI{{\bm{I}}}
\DeclareMathAlphabet{\mathsfit}{\encodingdefault}{\sfdefault}{m}{sl}
\SetMathAlphabet{\mathsfit}{bold}{\encodingdefault}{\sfdefault}{bx}{n}
\def\gL{{\mathcal{L}}}
\def\gN{{\mathcal{N}}}
\def\gO{{\mathcal{O}}}
\def\sR{{\mathbb{R}}}
\newcommand{\indicator}{\mathds{1}}
\definecolor{darkblue}{rgb}{0.0,0.0,0.65}
\definecolor{darkred}{rgb}{0.65,0.0,0.0}
\definecolor{darkgreen}{rgb}{0.0,0.5,0.0}
\definecolor{tab:blue}{RGB}{31,119,180}  
\definecolor{tab:red}{RGB}{214,39,40}  
\definecolor{tab:green}{RGB}{44,160,44}  
\definecolor{tab:orange}{RGB}{255,127,14}  
\theoremstyle{plain}
\newtheorem{theorem}{Theorem}[section]
\newtheorem{lemma}{Lemma}[section]
\newtheorem{remark}{Remark}[section]
\newtheorem{hypothesis}{Hypothesis}
\newcommand{\rulesep}{\unskip\ \vrule\ }
\title{\textbf{Gradient Descent with Polyak's Momentum Finds Flatter Minima via Large Catapults}}
\date{}
\author{
  Prin Phunyaphibarn\thanks{Equal contributions} \thanks{Work done as an undergraduate intern at KAIST AI.} \\
  KAIST Math\\
  \texttt{\small prin10517@kaist.ac.kr} \\
  \and
  Junghyun Lee\footnotemark[1] \\
  KAIST AI\\
  \texttt{\small jh\_lee00@kaist.ac.kr} \\
  \and
  Bohan Wang \\
  USTC \& Microsoft Research Asia\\
  \texttt{\small bhwangfy@gmail.com} \\
  \and
  Huishuai Zhang \\
  Peking University \\
  \texttt{\small zhanghuishuai@pku.edu.cn} \\
  \and
  Chulhee Yun \\
  KAIST AI \\
  \texttt{\small chulhee.yun@kaist.ac.kr} \\
}
\begin{document}

\maketitle

\begin{abstract}
Although gradient descent with Polyak's momentum is widely used in modern machine and deep learning, a concrete understanding of its effects on the training trajectory remains elusive.
In this work, we empirically show that for linear diagonal networks and nonlinear neural networks, momentum gradient descent with a large learning rate displays large catapults, driving the iterates towards much flatter minima than those found by gradient descent.
We hypothesize that the large catapult is caused by momentum ``prolonging'' the self-stabilization effect~\citep{damian2023stabilization}.
We provide theoretical and empirical support for our hypothesis in a simple toy example and empirical evidence supporting our hypothesis for linear diagonal networks.
\end{abstract}

\section{Introduction}
Although momentum is one of the most widely used and crucial component for training modern neural networks~\citep{sutskever2013momentum}, our understanding of the effects of momentum on neural network training dynamics is still lacking.
Throughout the paper, for a loss function $\mathcal{L}(\vw)$, we consider Polyak's heavy-ball momentum (PHB; \citet{polyak1964momentum}) method given as follows:
\begin{equation}
\label{eqn:momentum}
    \vw_{t+1} = \vw_t - \eta_t \nabla \mathcal{L}(\vw_t) + \beta (\vw_t - \vw_{t-1}),
\end{equation}
where $\eta_t$ denotes the learning rate that may change over time and $\beta \in [0,1)$ is the momentum parameter where $\beta=0$ for gradient descent (GD). From here on, we will refer to Polyak's momentum simply as ``momentum'' or ``PHB''.
For momentum, we are not even sure yet why and when it accelerates (stochastic) gradient descent~\citep{fu2023momentum,ganesh2023momentum,kidambi2018momentum}, and perhaps more importantly, how it changes the implicit bias and thus the resulting model's generalization capability~\citep{wang2023momentum,wang2022implicit,ghosh2023implicit,jelassi2022momentum}.

An increasing number of works also consider training dynamics under the large learning rate regime. One such work closely related to ours is the \emph{catapult mechanism} introduced by \citet{lewkowycz2020catapult}. Classical optimization theory suggests that if the \emph{sharpness} $S := \lambda_{max}(\nabla^2 \mathcal{L})$, defined as the maximum eigenvalue of the Hessian of the training loss, is larger than a certain threshold, then training should be unstable and divergent. We refer to this threshold as the \emph{maximum stable sharpness (MSS)}, and it is equal to $2/\eta$ for GD and $2(1+\beta)/\eta$ for momentum~\citep{goh2017momentum}. Thus, conventional wisdom indicates that the learning rate $\eta$ should be selected so that $S < 2/\eta$. However, \citet{lewkowycz2020catapult} show that rather than diverging, GD initialized with a large learning rate $\eta$ satisfying $2/S < \eta  < 4/S$ displays a loss spike and a simultaneous sharpness drop, which drives the iterates towards a flat region with stable sharpness $S < 2/\eta$.
Following this observation, throughout this paper, we define a {\bf catapult} as \emph{a drastic sharpness reduction coupled with a single spike in the training loss}.
Although \citet{lewkowycz2020catapult} study the catapult phenomenon in the early training dynamics with a large constant learning rate, some recent studies~\cite{zhu2023catapult} consider experimental settings where the learning rate is initialized to be small ($\eta < 2/S$) and increases over training to \emph{induce} catapults. These works suggest that strange and interesting phenomena manifest in the large learning rate regime.

A phenomenon related to the catapult phenomenon that has garnered much attention in recent years is the \emph{edge of stability} or \emph{EoS} phenomenon. \citet{cohen2021stability} report two surprising phenomena that consistently occur during neural network training. First, when the sharpness is below the MSS, the sharpness tends to increase due to a phenomenon known as \emph{progressive sharpening (PS)}. Generally, progressive sharpening will increase the sharpness until it is above the MSS. However, instead of diverging as classical optimization theory suggests, the sharpness oscillates around the MSS while the training loss decreases non-monotonically. In a sense, the EoS phenomenon can be viewed as a cycle of a catapult and PS: once PS drives the sharpness above the MSS, a catapult occurs and decreases the sharpness below the MSS while the training loss spikes. Once the sharpness is lower than the MSS, PS increases it again, and the cycle continues. This cycle can be viewed together as sharpness oscillation and a non-monotonic decrease in training loss as reported by \citet{cohen2021stability}.
Subsequent works have formalized this intuition~\citep{damian2023stabilization,agarwala2023eos}. For more discussion on related works, please refer to Appendix~\ref{app:related-works}.

\paragraph{Contributions.}
In this work, we explore a phenomenon that arises by introducing momentum to the large learning rate regime. We provide experimental evidence suggesting that PHB in the large learning rate regime exhibits a phase of much steeper {\it sharpness drop} than GD with the same warmup, which we call {\bf large catapults}. These provide an implicit bias effect of driving the iterates towards much flatter minima compared to GD. Our contributions are as follows:
\begin{itemize}[leftmargin=10pt]
    \item As a motivating example, we train linear diagonal networks~\citep{woodworth2020kernel,nacson2022stepsize} with different learning rate schedules and initialization scales $\alpha$'s and demonstrate that PHB exhibits an implicit bias drastically different from GD, obtaining low test loss even at moderate $\alpha$. We identify the underlying cause for this difference and discover that \emph{large catapults} occur when using momentum with learning rate warmup. We also show empirically that this phenomenon can be observed for general nonlinear neural networks (Section~\ref{sec:catapults}).
    
    \item Drawing inspiration from the self-stabilization mechanism~\citep{damian2023stabilization} originally proposed in the EoS literature, we hypothesize that the large catapults occur because \emph{momentum prolongs the reduction of sharpness and movement along the ``unstable direction''} (Section~\ref{sec:prolong}).
    For simple networks (a ReLU scalar network and linear diagonal networks), we provide theoretical and empirical evidence supporting our hypothesis (Section~\ref{sec:toy}).
\end{itemize}



\section{Large Catapults in Momentum Gradient Descent}
\label{sec:catapults}

\subsection{Motivating Example: Linear Diagonal Networks}
\label{sec:ldn}

The linear diagonal network (LDN) is known to be one of the simplest non-linear models that display rich and non-trivial implicit bias~\citep{woodworth2020kernel} while still being mathematically tractable.
Here, we focus on the depth-2 linear diagonal network defined by
\begin{equation}
    f(\vx; \vu, \vv) := \langle \vu \odot \vu - \vv \odot \vv, \vx \rangle 
    , \quad \vx, \vu, \vv \in \sR^d,
\end{equation}
where $\vx$ is the input vector, $(\vu, \vv)$ are the trainable parameters, and $\vw := \vu \odot \vu - \vv \odot \vv$ is the linear coefficient vector.

\paragraph{Implicit Bias of LDN.} The training dynamics and implicit bias of the LDN have been rigorously investigated in the literature. Still, most works focus on the continuous regime with vanishing learning rate, such as the (stochastic) gradient flow~\citep{woodworth2020kernel,azulay2021initialization,moroshko2020initialization,pesme2023hopping,pesme2021sgd}.
Although \cite{papazov2024momentum} and \cite{lyu2024momentum} study the impact of momentum on training dynamics, they also focus on the flow regime, which cannot capture phenomena such as the edge of stability or catapults that are inherently unique to the large learning rate regime.
Recently, there has been some progress for (S)GD with finite step sizes~\citep{nacson2022stepsize,even2023diagonal}, but they do not consider momentum at all.

\citet{woodworth2020kernel} prove that for the sparse regression problem where the true $\vw_\star$ is assumed to be sparse, the gradient flow for LDN initialized at $\vu_0 = \vv_0 = \alpha\cdot\bm{1}$ converges to the minimum $\ell_1$-norm solution as $\alpha \rightarrow 0$ and minimum $\ell_2$-norm solution as $\alpha \rightarrow \infty$.
It is well-known that the minimum $\ell_1$-norm solution has a better generalization capability, due to sparsity of $\vw_\star$.
Subsequently, \cite{nacson2022stepsize} show that GD with finite learning rate consistently recovers solutions with smaller test loss, even for initializations with large $\alpha$'s, as shown in Figure~\ref{fig:LDN-gd}.

\vspace{-5pt}
\paragraph{Experimental Setting.}
We investigate the implicit bias effect of momentum in linear diagonal networks by adding momentum to the sparse regression experiment in \citet{nacson2022stepsize}.
Throughout this section, we mostly follow their experimental settings.
The training set is $\{(\vx_n, \vy_n)\}_{n=1}^N$ generated as $\vx_n \sim \gN(\bm\mu, \sigma^2 \mI_d)$ and $\vy_n = \langle \vw_\star, \vx_n \rangle$, where $N = 50, \sigma^2 = 5, d = 100, \bm\mu = 5\cdot\bm{1}$ and $\vw_\star = (\delta_{i \leq 5} / \sqrt{5})_i$.
We use the mean squared error (MSE) loss, and we initialize $\vu_0 = \vv_0 = \alpha\cdot\bm{1}$, where $\alpha \in [0, 0.3]$ is the initialization scale.
Following \cite{nacson2022stepsize}, for each $\eta_f$, we use linear warmup (linearly increasing the learning rate) for the first $\eta_f \cdot 10^6$ steps, starting from $\eta_i = 10^{-8}$.
For the momentum parameter of PHB, we use $\beta = 0.9$.

\begin{remark}[Learning Rate Warmup]
    Learning rate warmup is a standard practice in deep learning~\citep{goyal2017warmup}, with known benefits such as variance reduction~\citep{gotmare2018warmup,liu2020warmup} and preconditioning~\citep{gilmer2022stability}.
    We and \cite{nacson2022stepsize} consider learning rate warmup to avoid possible instabilities from using large learning rates.
\end{remark}


\paragraph{Results.}
Over the varying initialization scales $\alpha$ and (final) learning rates $\eta_f$, we report the final resulting test losses in Figure~\ref{fig:LDN}.
Note the striking difference between (a) GD and (b) PHB.
Compared to GD, whose final test loss saturates after some $\alpha$ as reported in \citet{nacson2022stepsize}, it is clear that
\begin{center}
    {\it PHB displays a fundamentally different implicit bias in the large learning rate regime.}
\end{center}
For PHB, the final test loss initially increases monotonically as $\alpha$ increases like GD. However, once $\alpha$ becomes larger than some threshold $\bar{\alpha}(\eta_f)$ (dependent on $\eta_f$), the test loss sharply drops close to zero.
We also note that $\bar{\alpha}(\eta_f)$ appears to decrease as $\eta_f$ increases.\footnote{In fact, we can see in Figure~\ref{fig:LDN-main-phb} that the final test loss slightly increases (in a noisy fashion) with $\alpha$ after the sharp drop at $\bar \alpha(\eta_f)$. We attribute this phenomenon to \emph{overshooting}; see Appendix~\ref{app:overshooting} for more discussion.}


\begin{figure*}[!t]
\centering
\subfigure[GD]{
\includegraphics[width=0.43\textwidth]{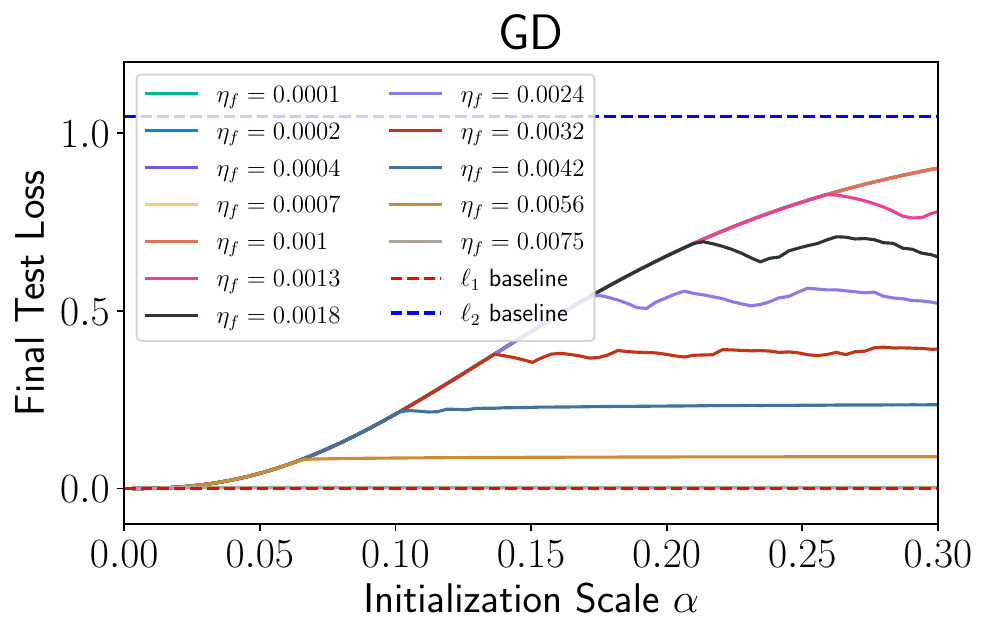}
\label{fig:LDN-gd}
}
\subfigure[PHB]{
\includegraphics[width=0.43\textwidth]{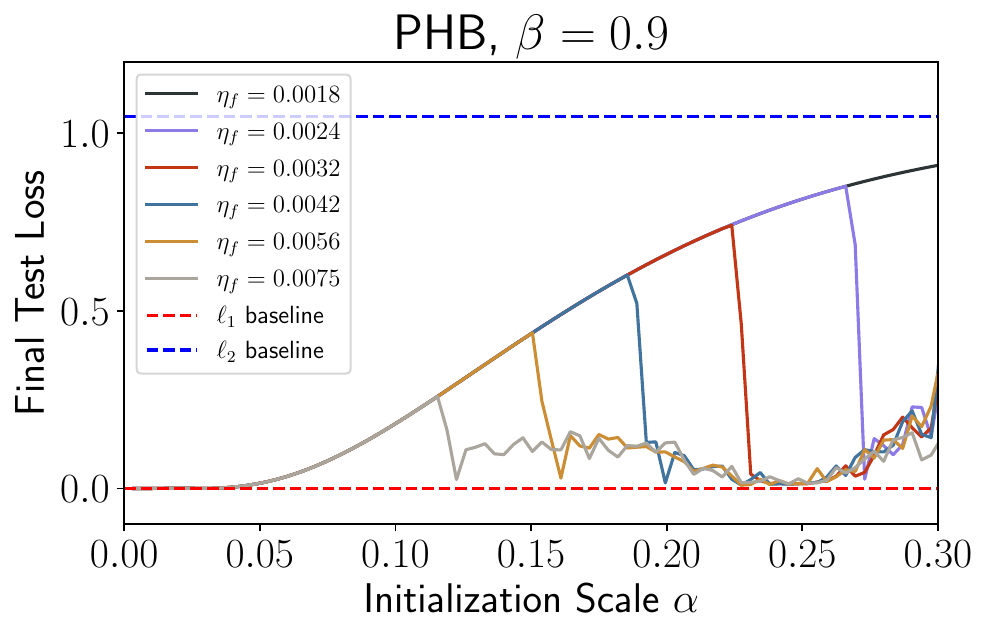}
\label{fig:LDN-main-phb}
}

\subfigure[GD Sharpness]{
\includegraphics[width=0.225\textwidth]{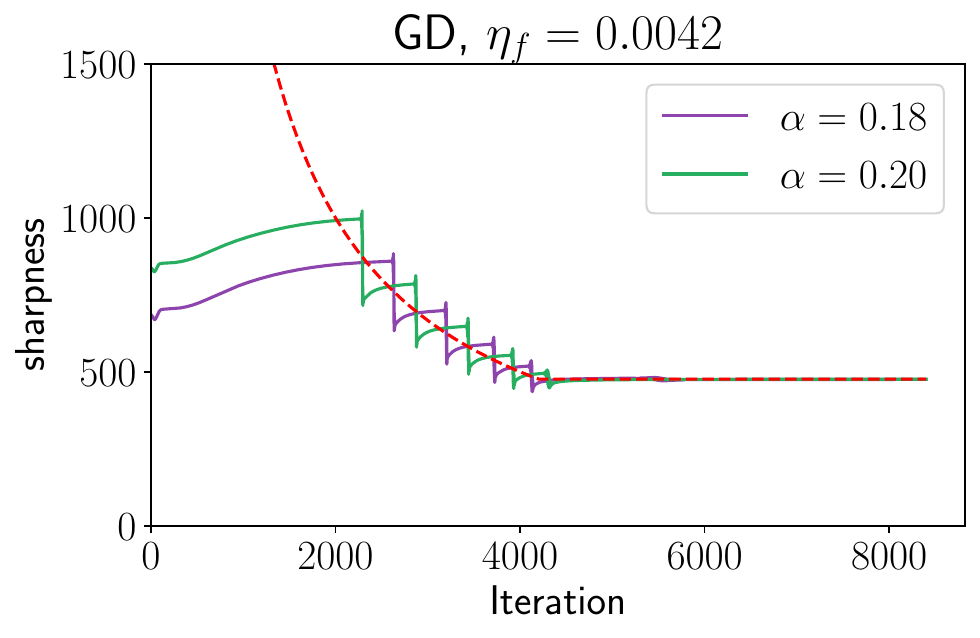}
\label{fig:LDN-sharpness-gd}
}
\subfigure[GD Train Loss]{
\includegraphics[width=0.225\textwidth]{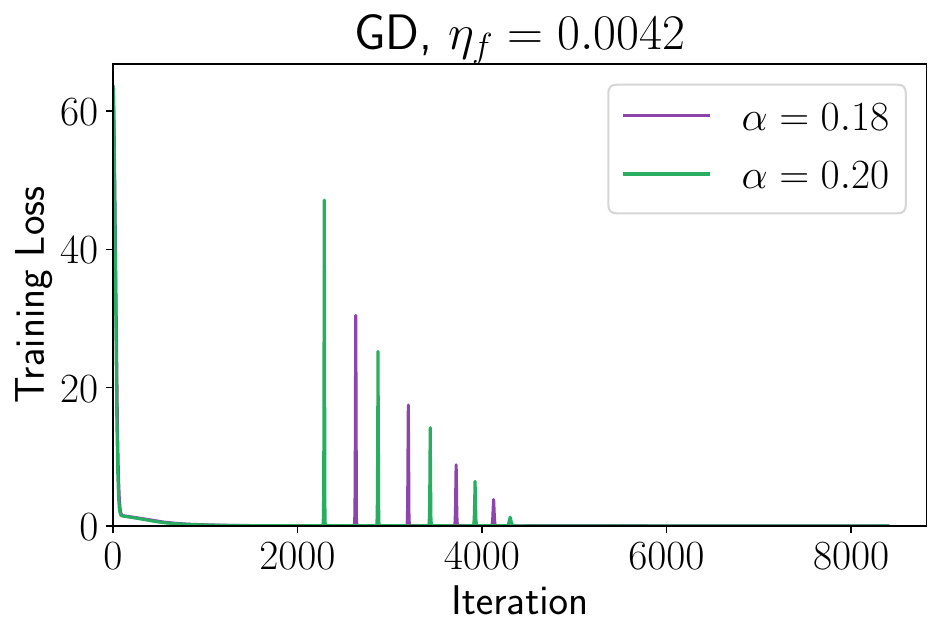}
\label{fig:LDN-train-loss-alpha-gd}
}
\subfigure[PHB Sharpness]{
\includegraphics[width=0.225\textwidth]{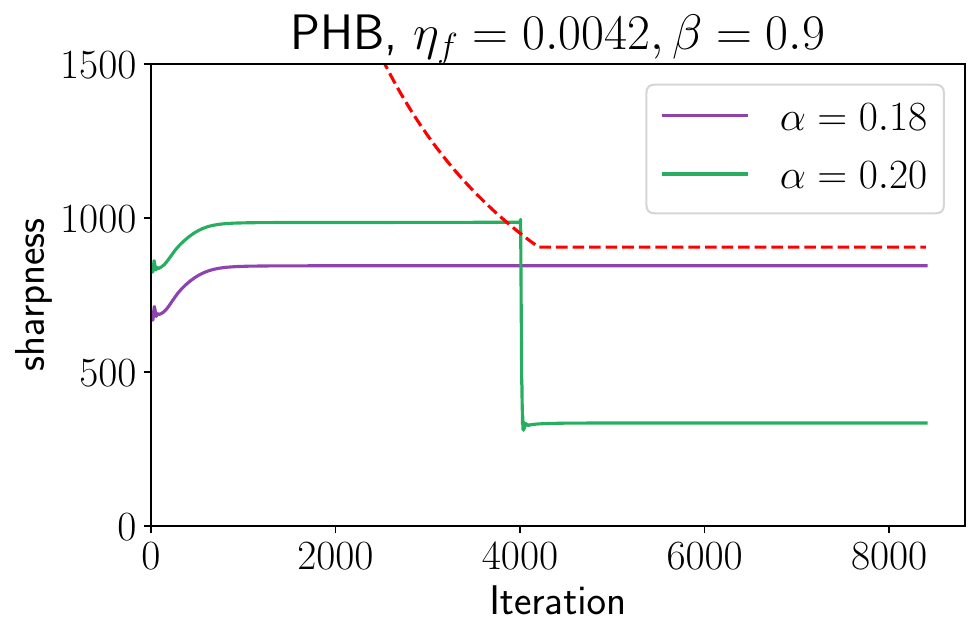}
\label{fig:LDN-sharpness-phb}
}
\subfigure[PHB Train Loss]{
\includegraphics[width=0.225\textwidth]{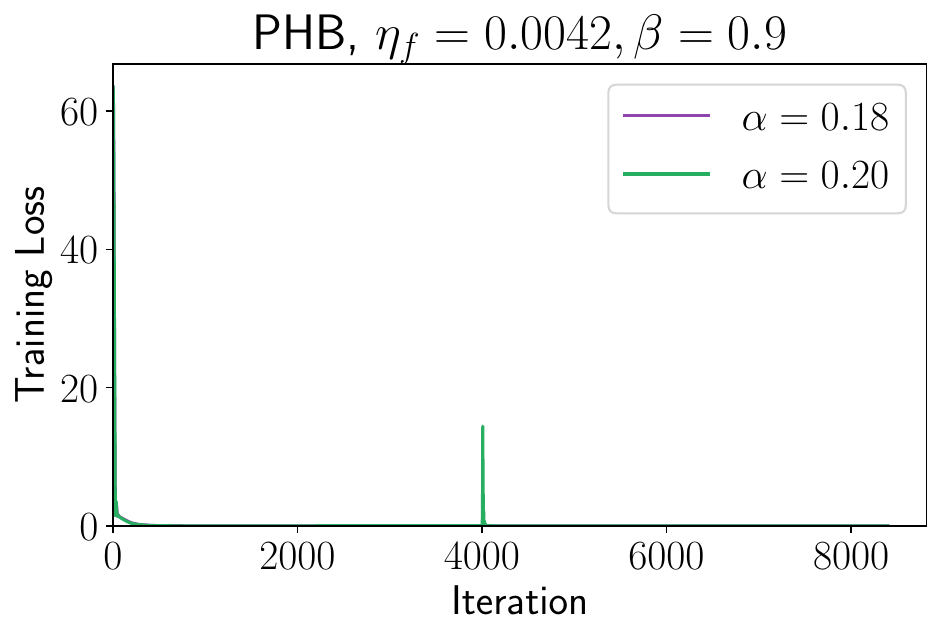}
\label{fig:LDN-train-loss-alpha-phb}
}

\caption{Experiments following the same setting as \citet{nacson2022stepsize}. In (a) and (b), "$\ell_1$ baseline" and "$\ell_2$ baseline" respectively stand for the solution with the minimal $\ell_1$ norm and the solution with the minimal $\ell_2$ norm to the regression problem. We use $\beta=0.9$ for PHB.}
\label{fig:LDN}
\end{figure*}

\paragraph{Momentum Induces Larger Catapults.}

To find the cause for our observed phenomena, we plot the evolution of sharpness for PHB in Figure~\ref{fig:LDN-sharpness-phb} at the $\alpha$ and $\eta$ at which the test loss suddenly drops.
Note that as the warmup proceeds, the MSS decreases monotonically, and as soon as the sharpness of the iterates ``touches'' the MSS curve, it goes through a rapid sharpness reduction, coupled with a loss spike (Figure~\ref{fig:LDN-train-loss-alpha-phb}).
The sharpness reduction of PHB is so drastic that the final sharpness is well below the MSS of the final learning rate.
This is in contrast to GD which goes through an incremental, step-wise sharpness reduction (Figure~\ref{fig:LDN-sharpness-gd}) with multiple loss spikes (Figure~\ref{fig:LDN-train-loss-alpha-gd}), and the final sharpness stays just below the MSS corresponding to the final learning rate; this phenomenon for SGD was also observed in \citet{zhu2023catapult}.
Here, one could make an educated guess that momentum induces much larger catapults that bias the solution towards flatter minima.

One observation is that momentum does not improve test loss over GD across \emph{all} settings (e.g., initialization), as shown in Figure~\ref{fig:LDN-main-phb}. A possible explanation is since the MSS $\frac{2(1+\beta)}{\eta}$ for PHB is higher than the MSS $\frac{2}{\eta}$  of GD with the same $\eta$, the $\alpha$ that causes a catapult for GD may not for PHB, resulting in no improvement, but if a catapult does occur, the sharpness reduction is much more drastic for PHB than GD. With this observation in mind, from this point on, whenever we compare the trajectory of GD and PHB starting from the same initialization, we always match the MSS of GD and PHB by properly rescaling the momentum learning rates $\eta_{PHB} = (1+\beta) \eta_{GD}$ for fair comparison. We only specify the GD learning rate in the text as $\eta$.

\begin{remark}[Role of Warmup in Large Catapults]
\label{rmk:roleofwarmup}
    Although we used linear warmup schedule in the experiments, warmup is not a strict requirement for (large) catapults to occur.
    Indeed, we observe that catapults consistently occur as long as (1) the iterate is initially close to a global minimum and (2) the learning rate is large enough so that the sharpness is slightly above the MSS but small enough so that the iterates do not completely diverge.
    Linear warmup provides a natural way to satisfy these two criteria, whereas 
    using large constant learning rates from the beginning usually results in severe instability.
    We remark that one could use different scheduling to induce the catapults; see Appendix~\ref{app:warmup-necessity} for ablations on different warm-up schedules.  
\end{remark}

\begin{figure*}[!t]
    \centering
    \subfigure[Width-200 FCN, $\eta_i=0.001$, $\eta_f=0.05$]{
        \includegraphics[width=0.29\textwidth]{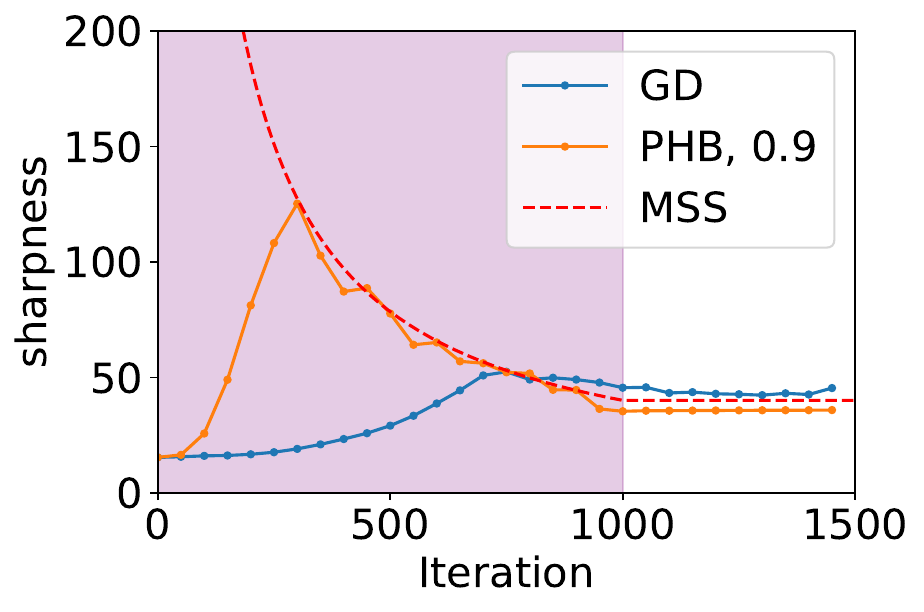}
    }
    \hfill
    \subfigure[Width-1000 FCN, $\eta_i=0.01$, $\eta_f=0.1$]{
        \includegraphics[width=0.29\textwidth]{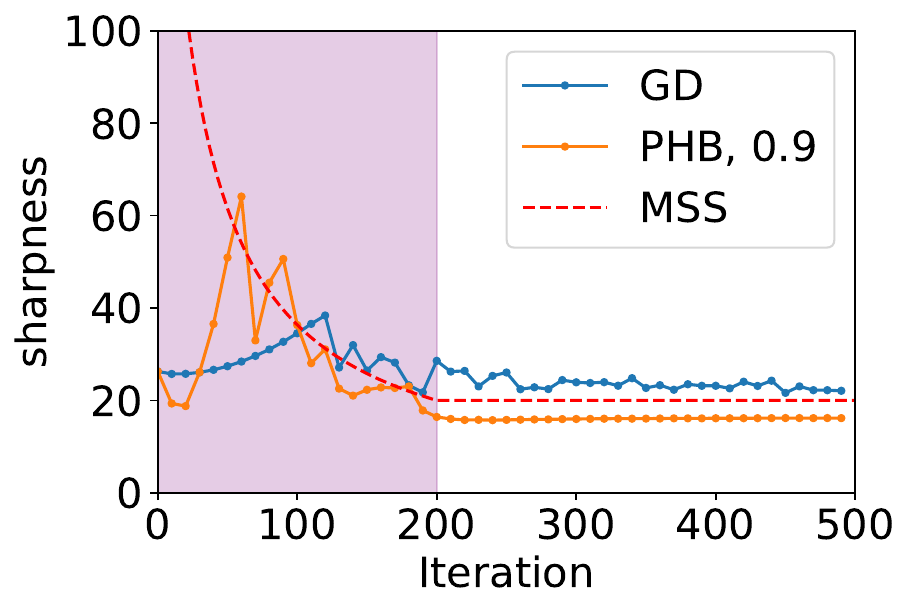}
    }
    \hfill
    \rulesep
    \subfigure[Width-100 FCN, $\eta_i=0.001$, $\eta_f=0.4$]{
    \includegraphics[width=0.29\textwidth]{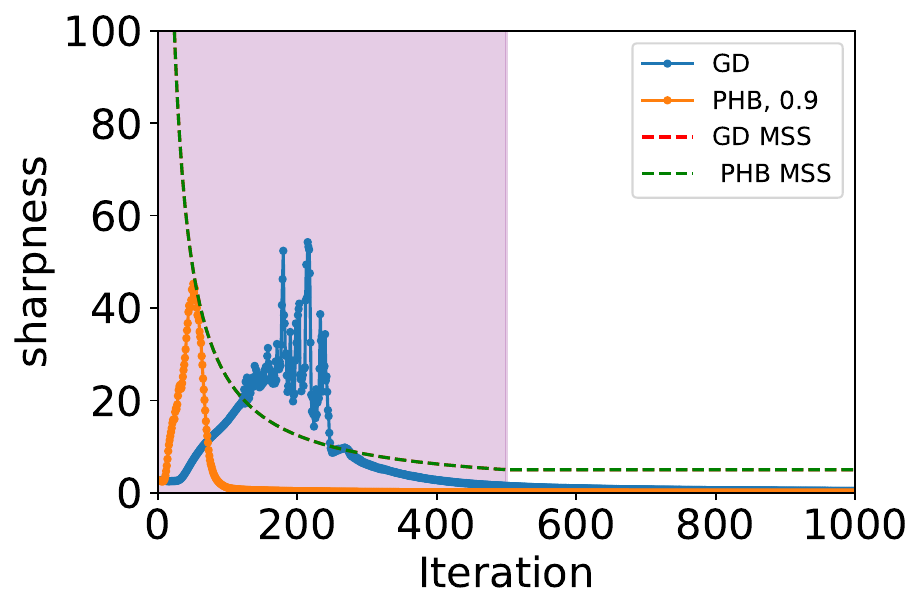}
    } \\
    \subfigure[Width-1000 FCN, $\eta_i=0.001$, $\eta_f=0.02$]{
        \includegraphics[width=0.29\textwidth]{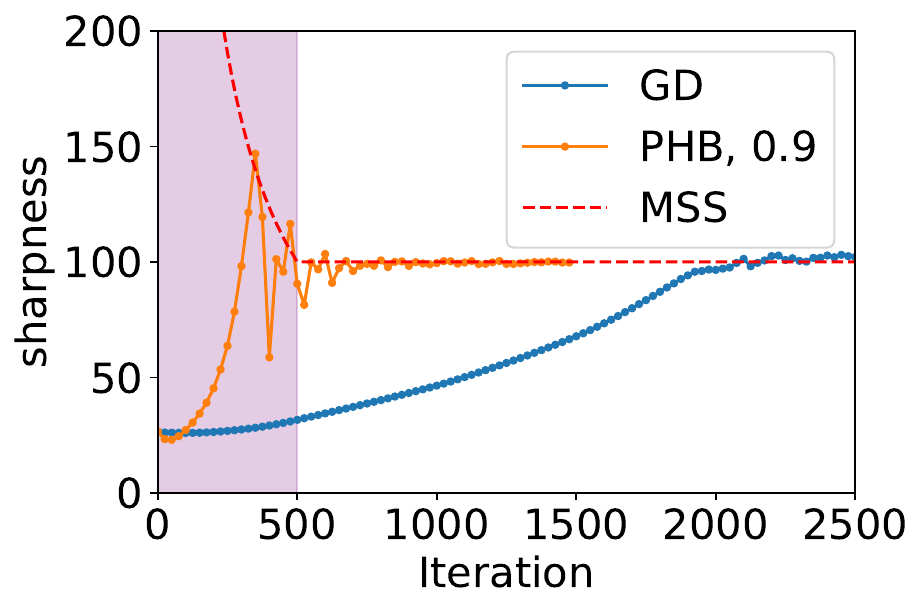}
    }
    \hfill
    \subfigure[Width-7000 FCN, $\eta_i=0.001$, $\eta_f=0.01$]{
        \includegraphics[width=0.29\textwidth]{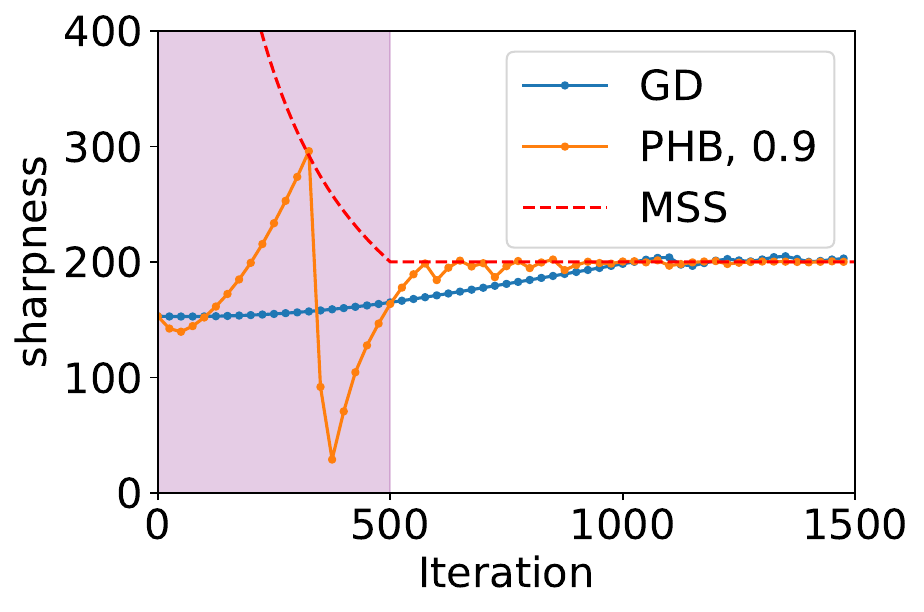}
    }
    \hfill
    \rulesep
    \subfigure[Width-200 FCN, $\eta_i=0.001$, $\eta_f=0.05$]{
    \includegraphics[width=0.31\textwidth]{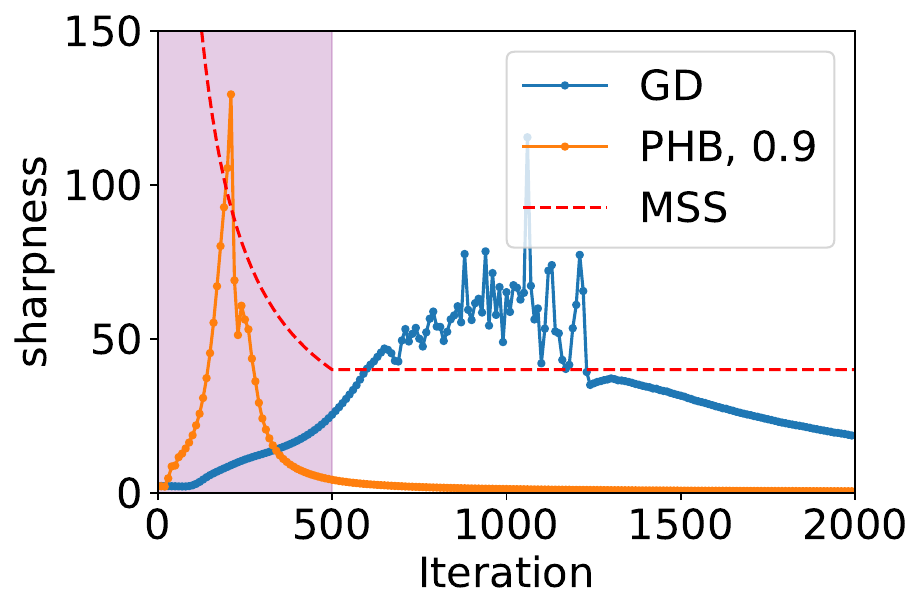}
    }
    \caption{Neural Networks trained on (a-c) 1k and (d-f) 5k subset of CIFAR10~\citep{krizhevsky2009learning}.
    For (a,b,d,e), we use the MSE loss, and for (c,f) we the CE loss.
    All FCNs are 3-layer and use ReLU activation. The shaded region is the linear warmup period. 
    }
    \label{fig:cifar10-exp}
\end{figure*}

\subsection{Nonlinear Neural Networks}
\label{sec:modern-arch}
To show that the phenomenon is not limited to LDNs, we also conduct experiments with narrow and wide fully-connected networks (FCN) on 1k and 5k-datapoint subsets of CIFAR10~\citep{krizhevsky2009learning} for both MSE {\it and} cross entropy (CE) losses.
The results are shown in Figure~\ref{fig:cifar10-exp}. It is clear that the catapults observed in PHB reduce the sharpness farther below the MSS than GD. It can also be observed that PHB is more ``sensitive'' to instability, in that sharpness being slightly above the MSS quickly causes catapults; this is in contrast to GD, where the sharpness sometimes hover above the MSS curve. 
The late-phase dynamics of CE experiments look different from MSE, because sharpness eventually decreases. This can be attributed to the fact that CE loss drives the iterates to flatter region once the model achieves 100\% training accuracy. Even in the case of CE, PHB displays a large drop of sharpness at the moment the sharpness curve touches the MSS curve.
Additional experiments, including results for ResNet20 and for $\beta = 0.99$, can be found in Appendices~\ref{app:modern} and \ref{app:099}.

\section{Why Large Catapult? Because Momentum Prolongs Self-Stabilization}
\label{sec:prolong}
In this section, we propose a hypothesis for the mechanism that momentum causes the large catapults.
To do that, we first review the \emph{self-stabilization}~\citep{damian2023stabilization} mechanism of GD. 
We use $\vw_{\max}(\bm\theta_\star)$ to denote the eigenvector corresponding to the sharpness $S$ (leading eigenvalue of the training loss Hessian) at some minimum $\bm\theta_\star$.
Self-stabilization consists of four stages: 

\textbf{Stage 1 (Progressive Sharpening (PS)\footnote{Although \citet{damian2023stabilization} assume that PS always occurs, it has been reported that it does not occur for simple settings~\citep{cohen2021stability,zhu2023catapult}.}).} The sharpness of the iterates increases to reach the MSS.

\textbf{Stage 2 (Blowup).} Once the sharpness becomes larger than the MSS, the iterates oscillate and diverge along $\vw_{\max}$.
In this stage, the loss starts to increase sharply, depending on the amount of divergence.

\textbf{Stage 3 (Self-Stabilization).} {\it Simultaneously}, the divergence along $\vw_{\max}$ induces a drift along the direction of $-\nabla S$ which decreases the sharpness. This is due to the cubic term of the Taylor expansion of
$\nabla \gL: \nabla^3 \gL(\vw_{\max}(\bm\theta), 
\vw_{\max}(\bm\theta)) = \nabla S(\bm\theta)$ for any\footnote{Here, one needs to assume that the leading eigenvector of $\nabla^2 \gL(\bm\theta)$ is unique.} $\bm\theta$~\citep[Lemma 2]{damian2023stabilization}; GD iterates oscillating along $\vw_{\max}$ introduce components in the direction of $-\nabla S$ to GD updates.


\textbf{Stage 4 (Return to Stability).} When the sharpness drops below the MSS, the oscillation in the $\vw_{\max}$ direction dampens, and dynamics become stable again.

Although this self-stabilization was developed to explain the mechanism of the edge-of-stability (EoS) for gradient descent, it has been already pointed out that EoS can be regarded as ``a never-ending series of micro-catapults''~\citep[Section 3.2]{cohen2021stability}.
Thus, it is natural to see that 
\begin{center}
    {\it a catapult can be explained as a single round of self-stabilization (Stages~2--4).}
\end{center}

We remark that such an explicit connection between catapults and self-stabilization was absent from prior literature on catapults~\citep{zhu2023catapult,zhu2023catapultquadratic,lewkowycz2020catapult}, where the focus was on either analytically proving or empirically showing the existence of a loss spike and sharpness\footnote{In prior works, the notion of sharpness was replaced with maximum eigenvalue of the NTK matrix.} drop.
A recent paper by \cite{zhu2023catapult} shows that for SGD, the loss spikes in training with SGD are ``caused'' by the catapults in the top eigenspace of the NTK, but they did not provide any explanation on the root cause of {\it why/how} catapults occur.

Given this correspondence between catapults and self-stabilization, it is natural to consider the role of momentum in this process.
Drawing from known intuitions for momentum dynamics~\citep{goh2017momentum,pedregosa2023momentum,muhlebach2021momentum}, we propose the following hypothesis:
\begin{hypothesis}
\label{hypothesis}
    Polyak's momentum prolongs self-stabilization in the following sense:
    \begin{enumerate}
        \item \label{hyp:stage2-3}As the iterates oscillate and diverge along $\vw_{\max}$, the momentum in the direction of $-\nabla S$ ``builds up'' (Stages 2--3).
        \item \label{hyp:stage4}Even after the sharpness drops below the MSS, momentum prolongs oscillation along $\vw_{\max}$, which in turn prolongs the movement in the $-\nabla S$ direction (Stage 4).
    \end{enumerate}
\end{hypothesis}

\section{Verifying our Hypothesis in Simple Networks}
\label{sec:toy}
In this section, we verify Hypothesis~\ref{hypothesis} in two simple networks: a ReLU scalar network and LDNs.
We first empirically verify our hypothesis and provide a theoretical characterization of the sharpness reduction for GD and PHB for the ReLU scalar network.
We then provide additional empirical evidence for our hypothesis in LDNs.

\subsection{Simple Network \#1. ReLU Scalar Network}
The ReLU scalar network is defined as $f(x; u, v) = v \sigma(ux)$ where $\sigma: \sR \to \sR$ is the ReLU activation function, $\sigma(\cdot) = \max(\cdot, 0)$.
We consider the single data point case $(x,y)=(1,0)$ trained using MSE loss function, given by $\gL(u, v) = \tfrac{1}{2} u^2 v^2 \indicator[u \geq 0],$ where $\indicator[\cdot]$ is the indicator function.
The sharpness values at the minima $(u, 0)$ are $S(u, 0) = u^2$ for $u \geq 0$ and $S(u, v) = 0$ for $u < 0$.
We consider constant learning rate $(1 + \beta) \eta$ to simplify the analysis.\footnote{In Figure~\ref{fig:toy-model-warmup} of \Cref{app:warmup-necessity}, we show that catapults can also be observed in this model with linear warmup.}
Recall from Remark~\ref{rmk:roleofwarmup} that catapults under linear warmup occur when the learning rate is just large enough so that the sharpness is slightly above the MSS. We reproduce this setup for constant learning rates by fixing the learning rate $\eta$ and initializing close to an unstable minimum: $(u_0,v_0)$ with $u_0^2 = \frac{2 + \epsilon}{\eta}$ for some small $\epsilon, |v_0| \in (0, 1)$.


\begin{figure}[!t]
\centering
\includegraphics[width=0.8\linewidth]{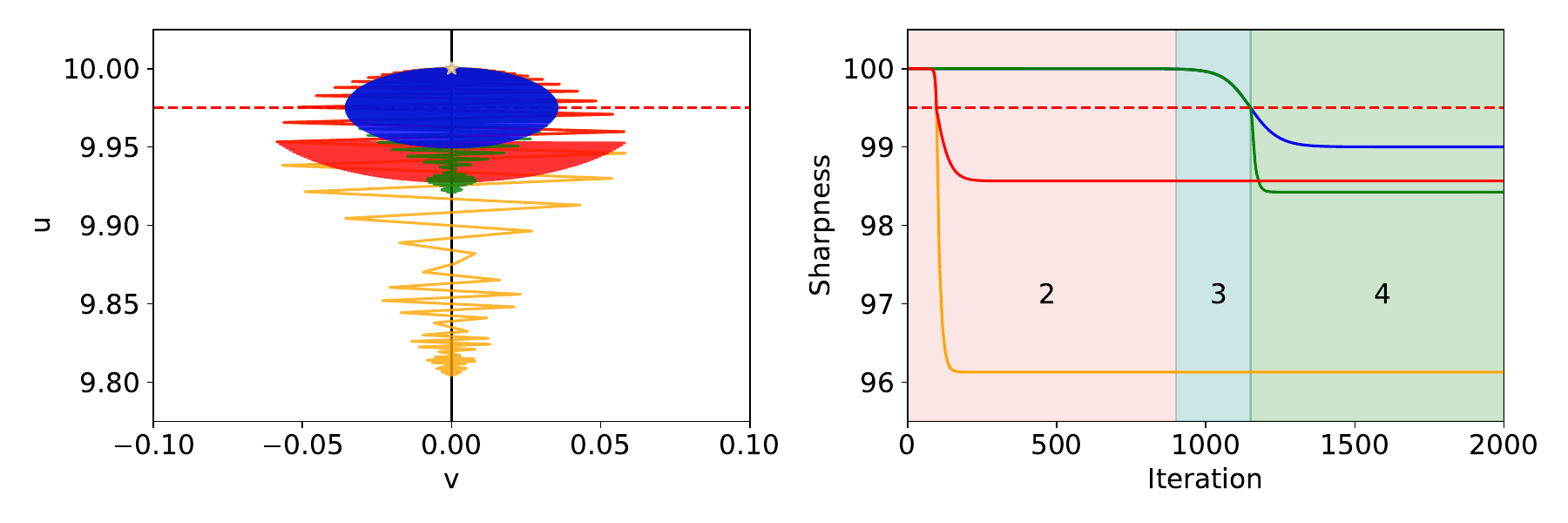} 
\caption{(Left) Trajectories of {\color{blue} GD}, {\color{orange} PHB}, {\color{darkgreen} GD $\to$ PHB}, {\color{red} PHB $\to$ GD} with $\beta=0.9$, $\eta = (2+\epsilon)/u_0^2$ where $\epsilon=0.01$, $(u_0, v_0)=(10, 10^{-6})$, and no warmup. (Right) The self-stabilization stages for {\color{blue}GD} are highlighted and labeled in the sharpness plot. The MSS is shown as the red dotted line.
}
\label{fig:scalar-net-toy}
\end{figure}

\subsubsection{Qualitative Observations}
In Figure~\ref{fig:scalar-net-toy}, we provide the trajectory and sharpness plots of GD and PHB for our toy model to illustrate the self-stabilization and our hypothesis.
First, we note that we do not observe any PS due to the model's simplicity. Since the initialization is already unstable, both trajectories start from Stage 2 (Blowup) in the language of self-stabilization.

\paragraph{GD Dynamics.}
As the initial sharpness is already above the MSS, the dynamics go through divergent oscillations in the $v$-direction (Stage 2).
This, in turn, amplifies the movement in the negative $u$-direction, reducing the sharpness and stabilizing the dynamics (Stage 3).
Eventually, the sharpness decreases below the MSS, and the oscillations in the $v$-direction are dampened until the iterates converge to a minimum of sharpness {\it just below the MSS} (Stage 4).
This is illustrated in Figure~\ref{fig:scalar-net-toy} with color {\color{blue} blue}, where one can actually see an approximate symmetry of the {\color{blue} GD} dynamics around $(\sqrt{2/\eta}, 0)$ (dashed {\color{red}red} line).
This interplay between oscillation and stabilization is precisely the self-stabilization for GD~\citep{damian2023stabilization}.


\paragraph{PHB Dynamics.} 
The dynamics of PHB are initially similar to GD dynamics when the sharpness is above the MSS. However, momentum further accelerates the negative $u$-direction in Stages 2--3. This momentum persists even after the sharpness is below the MSS (Stage 4), allowing momentum to \emph{prolong} the oscillation along $v$ direction, at the same time driving the iterates to flatter minima.
This is illustrated in Figure~\ref{fig:scalar-net-toy} with color {\color{orange} orange}, where one can see that momentum does not simply accelerate the GD dynamics but shows a \emph{qualitatively} different behavior, breaking the symmetry around $(\sqrt{2/\eta}, 0)$ that was present for {\color{blue} GD}.
We also note that although we rescale $\eta$ to $\eta (1 + \beta)$ for {\color{orange} PHB}, the sharpness reduction and the distance traveled in the $u$-direction of PHB is about 3.89 and 3.915 times that of {\color{blue} GD}, respectively.
Both are by a factor more than $1 + \beta = 1.9$, which confirms that the increased sharpness drop is not simply the result of the rescaled learning rate.

\paragraph{Further Verifying Hypothesis~\ref{hypothesis}.}
To further empirically validate our Hypothesis~\ref{hypothesis}, we consider two additional settings: {\color{darkgreen} GD $\to$ PHB} and {\color{red} PHB $\to$ GD}, where $A \to B$ means that we first run $A$, then switch to $B$ (with learning rate rescaling) once the sharpness crosses the MSS.
If Hypothesis~\ref{hypothesis}.\ref{hyp:stage2-3} holds, then due to the effect of momentum before crossing the MSS, one would expect {\color{red} PHB $\to$ GD} to experience a larger sharpness reduction than {\color{blue} GD}.
Similarly, if Hypothesis~\ref{hypothesis}.\ref{hyp:stage4} holds, then one would expect {\color{darkgreen} GD $\to$ PHB} to experience a larger sharpness reduction than {\color{blue} GD}.
Indeed, as shown in Figure~\ref{fig:scalar-net-toy}, in increasing order of sharpness reduction, we have {\color{blue} GD} < {\color{red} PHB $\to$ GD} < {\color{darkgreen} GD $\to$ PHB} < {\color{orange} PHB}. Indeed, this shows that although the effect of each part of our hypothesis (Stages 2--3 and Stage 4) is already sufficient to display a larger sharpness reduction, the \emph{combination} of these two factors results in an even larger sharpness reduction for {\color{orange} PHB}.

\subsubsection{Theoretical Analysis}
The GD/PHB dynamics of the toy example with learning rate $\eta > 0$ and momentum parameter $\beta \in [0, 1)$ are given as follows:
\begin{align*}
    u_{t+1} &= \left( 1 - \eta (1 + \beta) v_t^2 \indicator[u_t \geq 0] \right) u_t + \beta(u_t - u_{t-1}) \\
    v_{t+1} &= \left( 1 - \eta (1 + \beta) u_t^2 \indicator[u_t \geq 0] \right) v_t + \beta(v_t - v_{t-1}).
\end{align*}
Note that the learning rate is rescaled by a factor of $1 + \beta$ to keep the MSS constant at $\sqrt{\frac{2}{\eta}}$.
The proofs for all the statements provided here are deferred to Appendix~\ref{app:proofs}.

The following lemma states that for our dynamics, $\{u_t\}$ is monotone decreasing and convergent:
\begin{lemma}
\label{lem:u}
    $u_\infty := \lim_{t \rightarrow \infty} u_t \leq u_{t+1} \leq u_t$ for all $t \geq 0$.
    Furthermore, if $\tau_0 := \inf\left\{ t \geq 0 : u_t < 0 \right\} < \infty$, then we have that $u_\infty = \frac{u_{\tau_0} - \beta u_{\tau_0 - 1}}{1 - \beta}$.
\end{lemma}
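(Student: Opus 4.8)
The plan is to reduce everything to the scalar increment sequence $d_t := u_t - u_{t-1}$. Subtracting $u_t$ from the update for $u_{t+1}$ and using the standard Polyak initialization $u_{-1} = u_0$ (so $d_0 = 0$) gives the one-line recursion
\[
d_{t+1} = \beta\, d_t \;-\; \eta(1+\beta)\, v_t^2\, \indicator[u_t \ge 0]\, u_t .
\]
First I would prove by induction on $t$ that $d_t \le 0$ for all $t \ge 0$, i.e.\ that $\{u_t\}$ is non-increasing, which is exactly the inequality $u_{t+1}\le u_t$. The base case is $d_0 = 0$. For the step, the term $\beta d_t$ is $\le 0$ by the inductive hypothesis, and the term $-\eta(1+\beta)v_t^2\indicator[u_t\ge0]u_t$ is $\le 0$ when $u_t \ge 0$ and is identically $0$ when $u_t < 0$ (the indicator vanishes); in either case $d_{t+1}\le 0$.

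Next I would establish that $u_\infty$ exists, is finite, and equals the claimed value, splitting on $\tau_0$. If $\tau_0 = \infty$, then $u_t \ge 0$ for all $t$, so $\{u_t\}$ is non-increasing and bounded below by $0$, hence convergent. If $\tau_0 < \infty$, then by the monotonicity just proved, $u_t \le u_{\tau_0} < 0$ for every $t \ge \tau_0$; consequently $\indicator[u_t\ge0]=0$ for all $t\ge\tau_0$, and the $u$-dynamics collapses to the homogeneous linear recursion $d_{t+1} = \beta d_t$ for $t \ge \tau_0$. Therefore $d_{t} = \beta^{\,t-\tau_0} d_{\tau_0}$ for $t\ge\tau_0$, and since $\beta\in[0,1)$ this tail is absolutely summable, so telescoping gives
\[
u_\infty = u_{\tau_0} + \sum_{k\ge 1}\beta^k d_{\tau_0} = u_{\tau_0} + \frac{\beta}{1-\beta}\bigl(u_{\tau_0}-u_{\tau_0-1}\bigr) = \frac{u_{\tau_0}-\beta u_{\tau_0-1}}{1-\beta}.
\]
In all cases $u_\infty$ is finite, and as the limit of a non-increasing sequence it is the infimum, so $u_\infty \le u_{t+1}\le u_t$ for every $t\ge 0$, which completes the proof.

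I expect no serious obstacle here; the only point needing a moment of care is the observation that once $u_t$ becomes negative it remains negative (so the indicator is frozen at $0$ and the tail trajectory is genuinely linear), but this is immediate from the monotonicity established in the first step, and without it one could not a priori rule out $u_t \to -\infty$. It is also worth noting that the argument only uses $d_0 \le 0$, so it goes through verbatim for any initialization with $u_{-1}\ge u_0$, and that $\beta\in[0,1)$ is needed solely for the geometric summation in the $\tau_0<\infty$ case.
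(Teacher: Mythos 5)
Your proof is correct and follows essentially the same route as the paper's: an induction showing the increments $d_t = u_t - u_{t-1}$ are non-positive, then monotone convergence, with the limit in the $\tau_0 < \infty$ case obtained by unrolling the purely geometric recursion $d_{t+1} = \beta d_t$ once the indicator is permanently frozen at $0$. One small nit on your closing remark: $\beta \in [0,1)$ is not needed ``solely for the geometric summation''---the sign condition $\beta \geq 0$ is exactly what lets your induction step conclude $\beta d_t \leq 0$ from $d_t \leq 0$, so both halves of the interval are doing work.
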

Notice that our experiments correspond to the $\tau_0 = \infty$. So the question remains: how to characterize $u_\infty$ when $\tau_0 = \infty$, which we will assume from hereon and forth.

For that, we first introduce the following quantities:
\begin{equation}
\label{eq:CuCv}
    \tau_u := \inf\left\{ t \geq 0 : u_t^2 < \frac{2 - \epsilon}{\eta} \right\}, \quad
    C_u := \frac{u_{\tau_u} - \beta u_{\tau_u - 1}}{1-\beta},\quad
    C_v := \frac{1+\beta}{1-\beta}\sum_{t = \tau_u}^\infty v_t^2.
\end{equation}
They all are deterministic functions of ($\beta, \eta, \epsilon, v_0$), as GD/PHB are deterministic.
For our analysis, we fix some $\beta, \eta, \epsilon, v_0$, with appropriate conditions that will be elaborated in the statements.
We will drop the dependency from hereon and forth for notational simplicity.

The following theorem characterizes an upper bound on $u_\infty$:
\begin{theorem}
\label{thm:toy}
Suppose that $\tau_0 = \infty$ and $\epsilon, |v_0| < 1$.
Then, we have that
\begin{equation}
\label{eq:uinfub}
    u_\infty = \lim_{t \rightarrow \infty} u_t \leq 
    \frac{C_u}{1 + \eta C_v}
    =: 
    \overline{u}_\infty ,
\end{equation}
which implies a lower bound on the ``asymptotic'' sharpness reduction $\Delta S_\infty := S(u_0, 0) - S(u_\infty, 0) = u_0^2 - u_\infty^2$.
\end{theorem}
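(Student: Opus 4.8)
The plan is to turn the momentum recursion for $u_t$ into an exact telescoping identity for $u_\infty$ and then relax it to the desired inequality using the monotonicity from Lemma~\ref{lem:u}. Since $\tau_0=\infty$, every indicator $\indicator[u_t\geq 0]$ equals $1$, so the $u$-update is $u_{t+1}=(1-\eta(1+\beta)v_t^2)u_t+\beta(u_t-u_{t-1})$; introducing the increments $d_t:=u_t-u_{t-1}$, this becomes the first-order recursion $d_{t+1}=\beta d_t-\eta(1+\beta)v_t^2 u_t$. Also $\tau_u\geq 1$ because $u_0^2=(2+\epsilon)/\eta>(2-\epsilon)/\eta$, so $u_{\tau_u-1}$ and $d_{\tau_u}$ are well defined, and $\tau_0=\infty$ gives $u_t\geq 0$ for all $t$, hence $u_\infty\geq 0$.

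Next I would unroll the recursion from $t=\tau_u$, obtaining $d_{t+1}=\beta^{t+1-\tau_u}d_{\tau_u}-\eta(1+\beta)\sum_{j=\tau_u}^{t}\beta^{t-j}v_j^2 u_j$. Summing the telescoping series $\sum_{t=\tau_u}^{N}d_{t+1}=u_{N+1}-u_{\tau_u}$, swapping the order of summation in the resulting \emph{finite} double sum, collapsing the inner geometric series via $\sum_{t\geq j}\beta^{t-j}=\tfrac{1}{1-\beta}$, and letting $N\to\infty$ — where the interchange of limit and sum is legitimate because $v_j^2 u_j\geq 0$ (Tonelli/monotone convergence), which as a byproduct forces $\sum_{j\geq\tau_u}v_j^2 u_j<\infty$ since the left-hand side converges — yields the exact identity
\begin{equation*}
u_\infty = C_u - \frac{\eta(1+\beta)}{1-\beta}\sum_{j=\tau_u}^{\infty} v_j^2 u_j,
\end{equation*}
where I have used $u_{\tau_u}+\tfrac{\beta}{1-\beta}(u_{\tau_u}-u_{\tau_u-1})=\tfrac{u_{\tau_u}-\beta u_{\tau_u-1}}{1-\beta}=C_u$.

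Finally, since $u_j\geq u_\infty$ for all $j$ by Lemma~\ref{lem:u} and $v_j^2\geq 0$, I would lower-bound $\sum_{j\geq\tau_u}v_j^2 u_j\geq u_\infty\sum_{j\geq\tau_u}v_j^2$, substitute into the identity, and recognize $\tfrac{\eta(1+\beta)}{1-\beta}\sum_{j\geq\tau_u}v_j^2=\eta C_v$ to get $u_\infty\leq C_u-\eta C_v u_\infty$. Rearranging — valid since $1+\eta C_v\geq 1>0$ — gives $u_\infty\leq C_u/(1+\eta C_v)=\overline{u}_\infty$. The sharpness-reduction claim is then immediate: with $S(u,0)=u^2$ and $0\leq u_\infty\leq\overline{u}_\infty$ one gets $\Delta S_\infty=u_0^2-u_\infty^2\geq u_0^2-\overline{u}_\infty^2$ (and in particular $\overline{u}_\infty\geq 0$ follows a posteriori).

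The main obstacle is not the algebra but the soft-analysis bookkeeping: one must ensure $\tau_u<\infty$ so that $C_u$ and $C_v$ are well defined — equivalently, that the blowup phase genuinely drives $u_t^2$ below $(2-\epsilon)/\eta$ — which requires a separate short argument and ultimately relies on $v_0\neq 0$; and one must carefully justify the reordering and the $N\to\infty$ limit of the double sum, for which the nonnegativity of $v_j^2 u_j$ (itself a consequence of $\tau_0=\infty$) is exactly the property that makes everything go through.
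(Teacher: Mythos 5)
Your proof is correct, and it takes a noticeably cleaner route than the paper's. The paper also starts by telescoping the momentum recursion, but then freezes the tail sum at a finite horizon $N$, introduces a truncated $C_v(N)$, rearranges into the recursive inequality $u_t - \tfrac{C_u}{1+\eta C_v(N)} \leq \tfrac{\beta}{1+\eta(1-\beta)C_v(N)}\bigl(u_{t-1} - \tfrac{C_u}{1+\eta C_v(N)}\bigr)$, performs a case analysis on whether and when the iterates ever drop below the candidate fixed point, and finally sends $\min\{N,t\}\to\infty$. You instead fully unroll $d_{t+1}=\beta d_t-\eta(1+\beta)v_t^2 u_t$ into a geometric sum, swap the resulting finite double sum, and pass to the limit (justified by Tonelli since $v_j^2 u_j\geq 0$) to obtain the \emph{exact} identity $u_\infty = C_u - \tfrac{\eta(1+\beta)}{1-\beta}\sum_{j\geq\tau_u} v_j^2 u_j$; the inequality then falls out in one line from $u_j\geq u_\infty\geq 0$. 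Both proofs hinge on the same monotonicity bound $u_j\geq u_\infty$ from Lemma~\ref{lem:u}, but your exact identity avoids the truncation-and-case-split bookkeeping, is more informative (it quantifies the exact slack in the bound), and the deduction $u_\infty\leq C_u-\eta C_v u_\infty\Rightarrow u_\infty\leq C_u/(1+\eta C_v)$ is immediate. Your closing remark that $\tau_u<\infty$ is tacitly assumed is a fair and correct observation; the paper's proof relies on it equally silently (else $C_u,C_v$ are undefined), so this is an implicit hypothesis of the theorem rather than a gap in your argument.
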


Several observations can be made here.
According to Theorem~\ref{thm:toy}, the RHS decreases (i.e., larger displacement in the $-u$ direction) with larger $C_v$ and smaller $C_u$. 
First, notice that $C_u$ can be rewritten as $C_u = u_{\tau_u} + \frac{\beta}{1-\beta} (u_{\tau_u} - u_{\tau_u-1})$. Considering that $u_{\tau_u} \approx \sqrt{\frac{2-\epsilon}{\eta}}$, the key to understanding the behavior of $C_u$ is the second term, which can be viewed as the ``momentum'' that has built up in the $+\nabla S$ direction \emph{during} Stages 2--3. As per Hypothesis~\ref{hypothesis}.\ref{hyp:stage2-3}, it is natural to expect that $C_u$ will decrease for larger values of $\beta$ (recall that $u_{\tau_u} - u_{\tau_u-1} < 0$).
On the other hand, $C_v$ accumulates when there is more oscillation in the $v$ direction. Connecting this to Hypothesis~\ref{hypothesis}.\ref{hyp:stage4}, $C_v$ measures the movement in the $-\nabla S$ direction caused by divergence along $w_{\max}$ {\it after} the iterates cross the MSS (Stage 4), with the scaling factor $\frac{1+\beta}{1-\beta}$ capturing the ``prolonging'' effects of momentum.



\begin{figure*}[!t]
    \centering
    \subfigure[Change of of $C_u$ and $\frac{1}{1 + \eta C_v}$]{
        \includegraphics[width=0.4\textwidth]{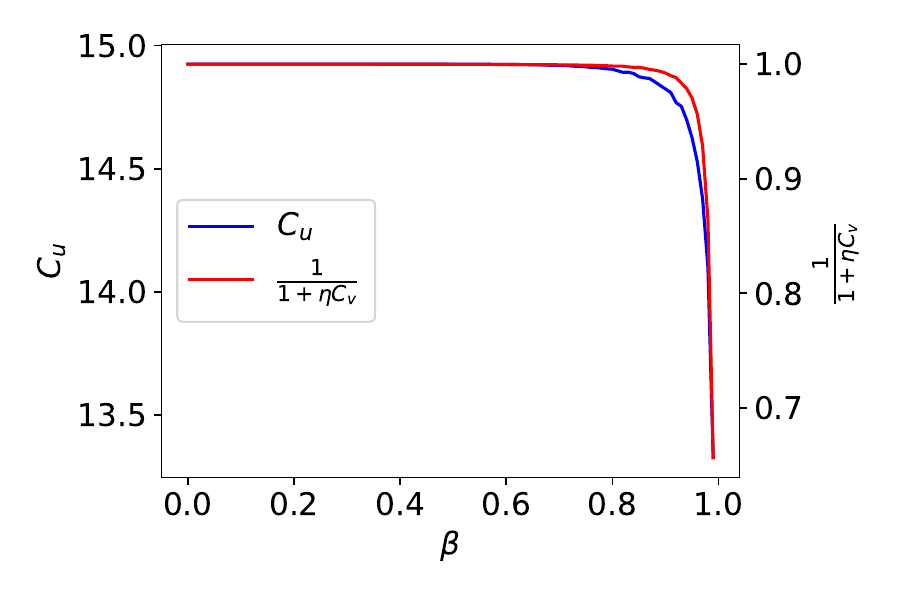}
        \label{subfig:scalar-Cu-Cv-bound}
    }
    \hfill
    \subfigure[Comparing sharpness reduction]{
        \includegraphics[width=0.4\textwidth]{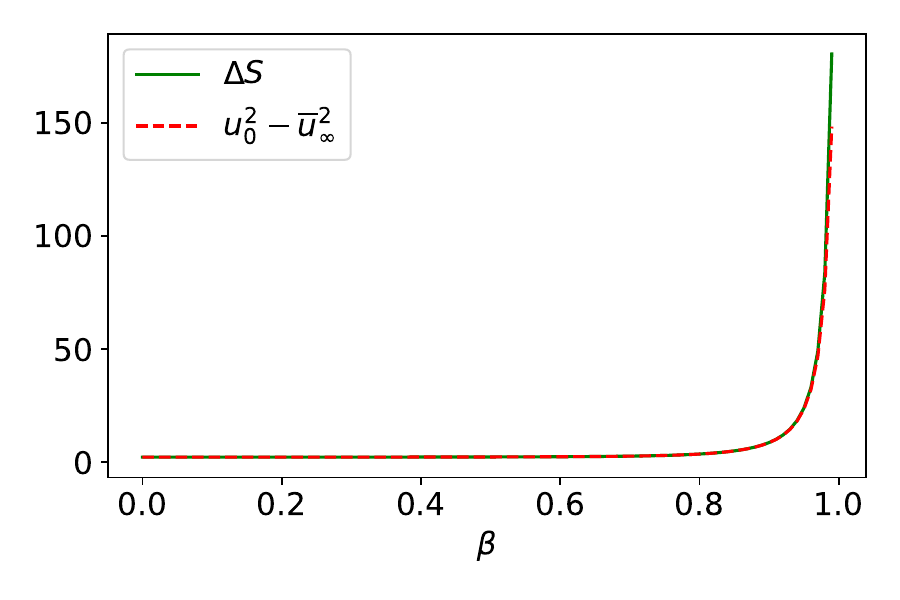}
        \label{subfig:scalar-sharpness-reduction}
    }
    \caption{Numerical verification of Theorem~\ref{thm:toy}.}
    \label{fig:ub-bound}
\end{figure*}

We show numerically that Theorem~\ref{thm:toy} is tight by rerunning the scalar ReLU network experiment with PHB for varying $\beta \in \{0.0, 0.01, \cdots, 0.99\}$ for $T = 10^5$ iterations.
In Figure~\ref{subfig:scalar-Cu-Cv-bound}, we plot $C_u$ and $\frac{1}{1 + \eta C_v}$ across $\beta$. 
We expect that both terms will decrease with increasing $\beta$, which turns out to be true; this confirms Hypothesis~\ref{hypothesis} for the scalar ReLU network case.
Connecting this observation to our previous experiment in Figure~\ref{fig:scalar-net-toy}, {\color{red} PHB $\to$ GD} corresponds to smaller $C_u$ due to the effects of momentum in stages 2-3 while {\color{darkgreen} GD $\to$ PHB} corresponds to larger $C_v$ (hence a smaller $\frac{1}{1 + \eta C_v}$) due to momentum prolonging the oscillations in stage 4. Each of these terms explains the increased sharpness reduction in {\color{red} PHB $\to$ GD} and {\color{darkgreen} GD $\to$ PHB} when compared to {\color{blue} GD}. The combined effect of these two quantities is shown in Figure~\ref{subfig:scalar-sharpness-reduction} where we plot the theoretical sharpness reduction $u_0^2 - \overline{u}_\infty^2$ and $\Delta S = u_0^2 - u_T^2$.
Note that $u_0^2 - \overline{u}_\infty^2$ is a tight lower bound on $\Delta S$.

As a counterpart, in the following theorem, we provide an asymptotic (sufficiently small $\epsilon$ and $v_0^2$, and moderate learning rate $\eta$) lower bound on $u_\infty$ for {\it GD}:
\begin{theorem}[Informal]
\label{thm:toy-GD}
    For GD ($\beta = 0$), suppose that $\epsilon = o(1)$, $v_0^2 = \gO(\epsilon)$, and $\eta = \Theta(1)$.
    Then, we have that 
    $u_\infty \geq \sqrt{\frac{2}{\eta}} - \gO( \sqrt{\epsilon} ),$ which implies $\Delta S_\infty \leq \gO(\sqrt{\epsilon}).$
\end{theorem}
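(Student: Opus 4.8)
The plan is to pass to the rescaled quantities $a_t := \eta u_t^2$ and $b_t := \eta v_t^2$. For $\beta = 0$ the update becomes $a_{t+1} = (1-b_t)^2 a_t$ and $b_{t+1} = (1-a_t)^2 b_t$, with $a_0 = 2+\epsilon$ and $b_0 = \eta v_0^2 = \gO(\epsilon)$; moreover the sharpness at $(u_t,0)$ equals $u_t^2 = a_t/\eta$, so being above/below the MSS means $a_t$ larger/smaller than $2$. By Lemma~\ref{lem:u} (with $\tau_0 = \infty$) the sequence $\{u_t\}$, hence $\{a_t\}$, is nonincreasing, which forces $b_t \in [0,2]$ for all $t$. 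Summing $a_{t+1} - a_t = -a_t b_t(2-b_t) \ge -2a_0 b_t$ over $t$ gives $a_\infty \ge a_0(1 - 2\sum_{t\ge 0} b_t)$, so the whole theorem reduces to the single estimate
\[
\sum_{t=0}^\infty \eta v_t^2 \;=\; \gO(\sqrt{\epsilon}).
\]
Indeed, this yields $a_\infty \ge (2+\epsilon)(1 - \gO(\sqrt\epsilon)) = 2 - \gO(\sqrt\epsilon)$, hence $u_\infty = \sqrt{a_\infty/\eta} \ge \sqrt{2/\eta} - \gO(\sqrt\epsilon)$ (using $\eta = \Theta(1)$), and $\Delta S_\infty = u_0^2 - u_\infty^2 \le \epsilon/\eta + \gO(\sqrt\epsilon) = \gO(\sqrt\epsilon)$.

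To control $\sum_t b_t$ I would split time at $\tau := \inf\{t : a_t \le 2\}$, which is at least $1$ since $a_0 > 2$. In the \emph{blowup phase} $t < \tau$ we have $a_t > 2$, so $b_{t+1} = (a_t - 1)^2 b_t \ge b_t$, i.e.\ $\{b_t\}$ is nondecreasing there, and it grows by a factor at most $(1+\epsilon)^2$ per step; also $a_t - a_{t+1} = a_t b_t(2-b_t) \ge 3 b_t$ whenever $b_t \le \tfrac12$. Since $\{a_t\}$ can drop by at most $a_0 - 2 = \epsilon$ before hitting $2$, a short bootstrap shows $b_t < \tfrac12$ throughout the blowup phase, and then $\sum_{t < \tau} b_t \le \tfrac{\epsilon}{3} + \gO(\epsilon) = \gO(\epsilon)$, $b_\tau = \gO(\epsilon)$, and $a_\tau \ge 2(1-b_{\tau-1})^2 = 2 - \gO(\epsilon)$. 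Crucially these bounds do not depend on the \emph{length} of the blowup phase, which can be large when $v_0$ is tiny; they use only the two monotonicities and the budget $\epsilon$.

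For the \emph{stabilization phase} $t \ge \tau$ I would use the near-conserved ``energy'' $E_t := (a_t - 2)^2 + 4b_t$; a direct computation gives the exact identity $E_{t+1} - E_t = a_t b_t^2 [\, 2(a_t - 2) + a_t(2 - b_t)^2 \,]$, so $|E_{t+1} - E_t| \le C b_t^2$ whenever $a_t$ and $|a_t-2|$ are $\gO(1)$. For $t \ge \tau$ we have $a_t \le 2$, hence $\delta_t := a_t - 2 \le 0$ and $b_{t+1} - b_t = a_t \delta_t b_t \le 0$, so $\{b_t\}$ is nonincreasing with $b_t \le b_\tau = \gO(\epsilon)$; consequently $E_t \le E_\tau + C b_\tau\, \Sigma_t$ where $\Sigma_t := \sum_{\tau \le s < t} b_s$, and since $\delta_t^2 \le E_t$ this gives $|\delta_t| \le \sqrt{E_\tau + C b_\tau\, \Sigma_t}$. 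On the other hand, telescoping $\delta$ yields $\delta_\tau - \delta_t = \sum_{\tau \le s < t} a_s b_s(2 - b_s) \ge \sum_{\tau \le s < t} b_s = \Sigma_t$ (using $a_s \ge 1$ and $b_s \le 1$, where $a_s \ge 1$ is supplied by the same bootstrap, as $E_s = \gO(\epsilon)$ forces $|\delta_s| < 1$), hence $\Sigma_t \le \delta_\tau - \delta_t \le |\delta_t|$. Combining, $\Sigma_t \le \sqrt{E_\tau + C b_\tau\, \Sigma_t}$, a self-referential quadratic in $\Sigma_t$ whose solution is $\Sigma_t \le \tfrac12(C b_\tau + \sqrt{C^2 b_\tau^2 + 4 E_\tau}) \le C b_\tau + \sqrt{E_\tau}$, uniformly in $t$. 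Since $E_\tau = (a_\tau - 2)^2 + 4 b_\tau = \gO(\epsilon)$ and $b_\tau = \gO(\epsilon)$, this gives $\sum_{t \ge \tau} b_t = \gO(\sqrt\epsilon)$, and adding the blowup contribution yields $\sum_{t\ge 0} b_t = \gO(\epsilon) + \gO(\sqrt\epsilon) = \gO(\sqrt\epsilon)$, as required.

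The main obstacle is closing the two bootstraps: the one-step drift of $E_t$ is $\gO(b_t^2)$ only once we know $b_t$ is small, yet it is the smallness of $E_t$ — equivalently of $\delta_t^2$ and of $b_t$ — that we feed back to keep $b_t$ and $a_t$ in the good range. The self-referential quadratic in $\Sigma_t$ is precisely what breaks this circularity, and it is also the exact place where $\sqrt{\epsilon}$ rather than $\epsilon$ enters, coming from $\sqrt{E_\tau}$ with $E_\tau$ and $b_\tau$ both of order $\epsilon$. Everything else — the telescoping reductions, the crude per-step growth factor $(1+\epsilon)^2$ in the blowup phase, and the elementary identities $a_{t+1} - a_t = -a_t b_t(2-b_t)$, $b_{t+1} - b_t = a_t b_t(a_t - 2)$ — is routine algebra on the scalar recursion.
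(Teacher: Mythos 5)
Your proof is correct and reaches the same $\gO(\sqrt{\epsilon})$ conclusion, but via a meaningfully different path than the paper's, even though both are two-phase Lyapunov arguments anchored at the unstable fixed point. The paper works directly in $(u,v)$ with the elliptical energy $P_t=(u_t-\sqrt{2/\eta})^2+\tfrac12 v_t^2$, proves the multiplicative one-step bound $P_{t+1}\le P_t\exp(2\eta^2 u_t^2 v_t^2)$ valid whenever $u_t\ge 1/\sqrt{\eta}$, controls $P_{\tau_u}$ by an exponential telescoping over the blowup phase, and then controls the stabilization phase by the geometric decay $v_{t+1}^2<(1-\epsilon)^2 v_t^2$, giving $\sum_{t\ge\tau_u}v_t^2=\gO(1)$. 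You instead pass to $(a,b)=(\eta u^2,\eta v^2)$ where the recursion is cleanest, reduce the whole statement to the single estimate $\sum_t b_t=\gO(\sqrt\epsilon)$ via the crude telescope $a_\infty\ge a_0(1-2\sum b_t)$, and handle the two phases with different tools: the blowup phase by a budget argument on $a_0-2=\epsilon$ combined with the per-step growth factor $(1+\epsilon)^2$ (not using the energy at all), and the stabilization phase by an additive energy drift $|E_{t+1}-E_t|\le Cb_t^2$ for $E_t=(a_t-2)^2+4b_t$ closed by a self-referential quadratic in $\Sigma_t=\sum_{\tau\le s<t}b_s$ rather than by geometric decay. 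Your $E_t$ is $\approx 8\eta P_t$ near the MSS crossing, so the energies are locally the same object, but the one-step estimates (additive vs.\ multiplicative) and the mechanism for controlling the post-crossing sum are genuinely different, and your route yields the sharper intermediate bound $\sum_{t\ge\tau}b_t=\gO(\sqrt\epsilon)$ versus the paper's $\gO(1)$ for $C_v$. Two minor points worth tightening in a full write-up: (i) the bootstrap that $b_t<\tfrac12$ throughout the blowup phase needs the observation that any $b_{T-2}$ would already have to be of order $\tfrac12/(1+\epsilon)^4$, contradicting $\sum_{s\le T-2}b_s\le\epsilon/3$ — otherwise the boundary case $T=\tau$ is not obviously a contradiction; and (ii) the induction supplying $a_s\ge 1$ in the stabilization phase should be stated explicitly as a joint induction with the $E_s$ bound, since the one-step drift constant $C$ and the telescope inequality both presuppose it.
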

\begin{proof}[Proof sketch.]
    Inspired by the empirical observations that the GD iterates' envelope resembles an ellipse ({\color{blue} blue} trajectory in Figure~\ref{fig:scalar-net-toy}), we show that the elliptical ``energy'' $P_t := \left( u_t - \sqrt{\frac{2}{\eta}} \right)^2 + \frac{1}{2}v_t^2$ is approximately conserved over $t$ and $P_\infty = \gO(\epsilon)$, which implies the desired bound on $u_\infty$.
\end{proof}

In Theorem~\ref{thm:toy}, for GD ($\beta = 0$), we have that $u_\infty \leq \frac{1}{1 + \gO(1)} \sqrt{\frac{2 - \epsilon}{\eta}}$.
This follows from the fact that $C_v = \gO(1)$ for GD, which we show in the process of proving Theorem~\ref{thm:toy-GD}.
This then implies that $\Delta S_\infty \geq \Omega(\epsilon)$, off by a factor of $\sqrt{\epsilon}$ compared to Theorem~\ref{thm:toy-GD}.
We leave extending Theorem~\ref{thm:toy-GD} to PHB for future work. The key difficulty is that the energy argument fails; this can be seen empirically in the PHB trajectory ({\color{orange} orange}) in Figure~\ref{fig:scalar-net-toy}.


\paragraph{Comparisons to Prior Works.}
Our model is essentially identical to the 1D $uv$-model analyzed in \citet{kalra2023catapult,lewkowycz2020catapult}.
However, neither work considers the {\it PHB} dynamics in the parameter space $(u_t, v_t)$. Furthermore, \cite{lewkowycz2020catapult} additionally require NTK scaling with a finite but sufficiently large width.
Our characterization of the GD dynamics share many characteristics with the single-neuron neural network described and analyzed by \citet{ahn2022threshold}, like the quasi-static principle (an ellipse-like envelope) and final resulting sharpness being close to the MSS.
However, their analysis cannot be directly applied to our scenario, as $\ell(u) = \frac{1}{2} u^2$ does not satisfy their assumptions ($\ell$ is globally Lipschitz and $\ell'(u) / u$ decays locally away from $u = 0$).
Although we focus on a simple model, our Theorems~\ref{thm:toy} and \ref{thm:toy-GD} provide a rigorous characterization of the parameter dynamics of GD/PHB with large learning rates beyond the previously considered assumptions.

\subsection{Simple Network \#2. Linear Diagonal Networks}
\label{sec:ldn-hypothesis}


To show the potential of our verification of hypothesis extending beyond the scalar ReLU network, we revisit the LDNs and perform similar experiments as in the ReLU scalar networks, i.e., we plot the sharpness across 4 scenarios: {\color{blue} GD}, {\color{orange} PHB}, {\color{darkgreen} GD $\to$ PHB}, and {\color{red} PHB $\to$ GD}.
This is to show empirical evidence that Hypothesis~\ref{hypothesis} also holds for LDNs.
We first initialize the weights close to a minimum by running GD until the (MSE) loss is less than $0.001$.
We then run each scenario from that same initialization using $\eta = \frac{(2+\epsilon)(1+\beta)}{S_0}$ (and adjusting the learning rates as needed after the sharpness crosses the MSS) where $\epsilon = 0.03$ and $S_0$ is the sharpness of the initialization. As shown in Figure~\ref{subfig:LDNonoff}, in increasing order of reduction, we again have {\color{blue} GD} < {\color{red} PHB $\to$ GD} < {\color{darkgreen} GD $\to$ PHB} < {\color{orange} PHB}.

\begin{figure}
    \centering
    \subfigure[Sharpness of LDN for {\color{blue} (1) GD}, {\color{orange} (2) PHB}, {\color{red} (3) PHB $\to$ GD}, and {\color{darkgreen} (4) GD $\to$ PHB}. ]{
        \includegraphics[width=0.42\textwidth]{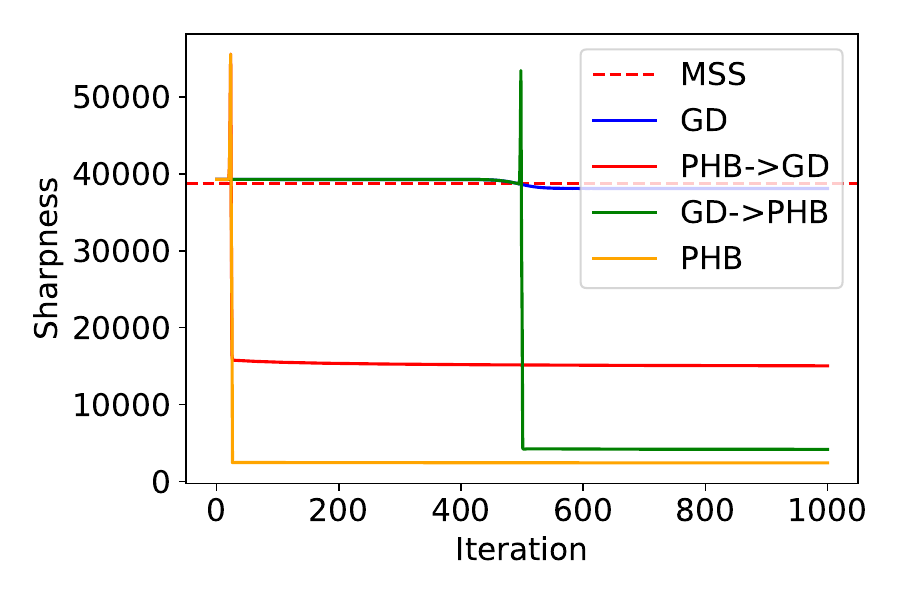}
        \label{subfig:LDNonoff}
    }
    \hfill
    \subfigure[Numerical verification of Theorem~\ref{thm:toy} for $\gL(u,v) = \frac{1}{2} (u^2 - v^2 - 1)^2$.]{
        \includegraphics[width=0.42\textwidth]{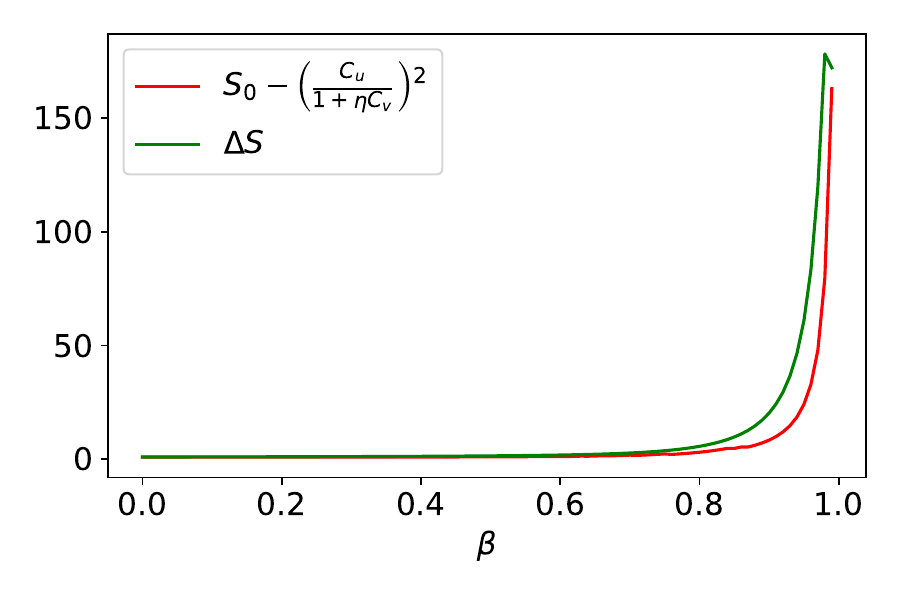}
        \label{subfig:LDNlb}
    }
    \caption{Empirical Validation of Hypothesis~\ref{hypothesis} and theory using LDNs}
    \label{fig:ldn-momentum-on-off}
\end{figure}

We now turn to the question of whether Theorem~\ref{thm:toy} can be extended to LDNs as well.
For a general loss function $\gL(\bm\theta)$, consider running GD and PHB (with rescaled learning rate) from initialization $\bm\theta_0$ satisfying $\lambda_{\max}(\nabla^2 \gL(\bm\theta_0)) = \frac{2 + \epsilon}{\eta}$ for some small $\epsilon \in (0, 1)$.
Then, we extend the definition of $C_u$ and $C_v$ (Eqn.~\eqref{eq:CuCv}) as the following. For the GD/PHB iterates $\{\bm\theta_t\}_{t\geq 0}$, we solve gradient flow (GF) starting from $\bm\theta_t$ to convergence, and obtain the solution $\bm\theta_t^*$, which can be viewed as the global minimum ``closest'' to $\bm\theta_t$. Then, for $\tau_u := \inf\left\{ t \geq 0 : \lambda_{\max}(\bm\theta_t^*) < \frac{2 - \epsilon}{\eta} \right\}$, define
\begin{equation}
\label{eq:CuCv2}
    C_u := \tfrac{\sqrt{S(\bm\theta_{\tau_u}^*)} - \beta \sqrt{S(\bm\theta_{\tau_u-1}^*)}}{1-\beta},\quad
    C_v := \tfrac{1+\beta}{1-\beta} \sum\nolimits_{t= \tau_u}^\infty \left\langle \vw_{\max}(\bm\theta_t^*), \bm\theta_t - \bm\theta_t^* \right\rangle^2,
\end{equation}
where $\vw_{\max}(\bm\theta^*)$ is the leading eigenvector
of $\nabla^2 \gL(\bm\theta^*)$.
Once we calculate Eqn.~\eqref{eq:CuCv2}, a ``lower bound'' on the sharpness displacement can be derived from Eqn.~\eqref{eq:uinfub}, although there is no theoretical guarantee that this should indeed hold for the new $\gL$. Still, we can numerically calculate it and assess the possibility of extending Theorem~\ref{thm:toy} to general functions.

However, running GF to convergence at every iteration is the biggest bottleneck of numerical verification.
As a proof of concept, we consider a simple function motivated by the LDN architecture: $\gL(u,v) = \frac{1}{2} (u^2 - v^2 - 1)^2$.
This corresponds to the MSE loss of LDN for a single data point $(\vx,y) = (\ve_1, 1)$, and has the nice property that the GF trajectory $(u(t),v(t))$ satisfies $u(t)v(t) = u(0)v(0)$ for all $t \geq 0$ (see Appendix~\ref{sec:LDNLBdetails} for the proof). Using this property, we can calculate $\bm\theta_t^*$ for any $\bm\theta_t$, hence calculating the desired lower bound. Starting from a $\bm\theta_0 = (u_0, v_0)$ with sharpness $\frac{2+\epsilon}{\eta}$, we ran GD/PHB for a range of $\beta$'s, calculated the actual sharpness displacement and its lower bound. Appendix~\ref{sec:LDNLBdetails} contains further details on the experiment.
The results are shown in Figure~\ref{subfig:LDNlb}, which shows a similar trend as in the scalar ReLU network.
There is a small decrease in sharpness reduction at $\beta=0.99$, which can be attributed to overshooting; refer to Appendix~\ref{app:overshooting} for a discussion.
\vspace*{-3pt}
\section{Conclusion and Future Work}
\vspace*{-3pt}
In this paper, for the first time, we show that PHB with large learning rate induces large catapults, resulting in a much larger sharpness reduction than that of GD.
We first provide empirical evidence for this on linear diagonal networks (LDNs) and nonlinear neural networks.
We then hypothesize that the large catapult of PHB is caused by momentum {\it prolonging} self-stabilization~\citep{damian2023stabilization}.
We verify our hypothesis for ReLU scalar networks and LDNs and rigorously prove that it holds.


\noindent\textbf{Future Work.}~~
Firstly for PHB, we sometimes observe a lack of PS after the large catapults, as shown in Figure~\ref{fig:cifar10-exp}(a--c). However, with larger datasets, PS appears to occur again, as shown in Figure~\ref{fig:cifar10-exp}(d--f). We conjecture that the lack of PS in ReLU networks is due to catapults aggressively reducing the number of active ReLU neurons~\citep{lu2020dying,andriushchenko2023sam}; refer to Appendix~\ref{app:overshooting} for a detailed discussion. Understanding the presence and absence of PS in different setup is an interesting direction.
Secondly, although PHB drives iterates towards flatter minima via the large catapults, we do not always observe better generalization.
Further exploring the connection between catapults and generalization is an interesting area for future research and may contribute to the growing literature on the connection between flatness and generalization~\citep{hochreiter1997flat,dinh2017sharp,even2023diagonal,andriushchenko2023sharpness}.




\section*{Acknowledgements}
We thank the anonymous reviewers for their helpful comments in improving the paper.
We also thank Alex Damian, Eshaan Nichani, Jeremy Cohen, Jason Lee, and Molei Tao for helpful discussions at the NeurIPS 2023 M3L Workshop.
This project was funded by the 2023 Microsoft Research Asia Collaborative Research grant funded by Microsoft.
J. Lee and C. Yun were supported by the Institute of Information \& Communications Technology Planning \& Evaluation (IITP) grant funded by the Korean government(MSIT) (No.2019-0-00075, Artificial Intelligence Graduate School Program(KAIST)).


\bibliographystyle{plainnat}
\bibliography{references}

\newpage
\appendix
\newpage
\tableofcontents
\newpage
\section{Related Works}
\label{app:related-works}

\paragraph{Catapults in (S)GD.}
\citet{lewkowycz2020catapult} are the first to describe the catapult mechanism of GD, the phenomenon of momentary spikes in training loss resulting in lower sharpness. They then analytically prove that catapults occur for $f = d^{-1/2} \vu^T \vv$, where $\vu, \vv \in \sR^d$, when $d$ is sufficiently large.
Since then, the theoretical analysis of the catapult mechanism has been extended to other models such as (neural) quadratic models~\citep{zhu2023catapultquadratic,meltzer2023catapult,agarwala2023eos} and matrix factorization~\citep{wang2022implicit,wang2023catapult} (from the perspective of ``balancing'' effect).
On the empirical side, \citet{zhu2023catapult} study the loss spikes that occur during the catapults of SGD by decomposing the loss into components corresponding to different eigenspaces of the neural tangent kernel (NTK; \citet{jacot2018ntk}) and show that catapults improve generalization through feature learning.
Recent work by \citet{kalra2023catapult} explores the training dynamics of SGD for neural networks for various settings and uncovers four distinct regimes controlled by critical values of the sharpness.
Despite an abundance of works on the catapult mechanism, to the best of our knowledge, we are the first to report the substantial differences in the catapult mechanism and sharpness reduction between GD and PHB.



\paragraph{Edge of Stability.}
Many works have been devoted to the theoretical analysis of PS and EoS~\citep{ahn2022stability,zhu2023minimalist,ahn2022threshold,wang2022stability,arora2022stability,damian2023stabilization,song2023bifurcation,wu2023logistic,wu2024logistic} as well as their systematic empirical analyses~\citep{gilmer2022stability,cohen2021stability,cohen2022stability}. \citet{ahn2022threshold} rigorously characterize the EoS phenomenon in a single-neuron network. They show that the single-neuron training dynamics under the large learning rate regime display oscillatory ``bouncing'' behavior accompanied by a decrease in sharpness so that the sharpness converges below the MSS. \citet{song2023bifurcation} extend the results of \citet{ahn2022threshold} to two-layer fully connected linear networks trained using a single data point. \citet{damian2023stabilization} explain the EoS phenomenon by proving a ``self-stabilization'' property in GD. By utilizing cubic Taylor expansions of the loss function, they prove that if the sharpness is above the MSS, GD will begin to oscillate and diverge in the leading eigenvector direction. However, the oscillations induce a ``self-stabilization'' effect which moves the iterates in the $-\nabla S$ direction thus reducing the sharpness and stabilizing the dynamics. This interplay between the oscillations and self-stabilization result in the EoS behavior.
There are also works explaining the effect of optimization tricks by considering their interaction with EoS \citep{gilmer2022stability,fu2023momentum}.
Lastly, we note that although the main focus of \citet{cohen2021stability} is on GD, the authors also include PHB experiments, mostly in their Appendix~N. Their experiments also display occasional larger catapults, but this difference is not highlighted.

\paragraph{Role of Momentum in Generalization.}
Closely related works are \citet{ghosh2023implicit,jelassi2022momentum}, which also consider the positive effects of momentum for nonlinear neural networks.
\citet{jelassi2022momentum} prove that a binary classification setting exists where PHB provably generalizes better than GD for a one-hidden-layer convolutional network.
\citet{ghosh2023implicit} derive an implicit gradient regularizer~\cite{barrett2021implicit} for PHB that biases the solution towards flatter minima, and they showed that the momentum regularizer term is stronger than that of GD by a factor of $\frac{1+\beta}{1 - \beta}$.
In contrast, we report that the behaviors of PHB and GD are fundamentally different.
\newpage
\section{Missing Proofs for Section~\ref{sec:toy}}
\label{app:proofs}

Throughout, recall the following quantities:
\begin{equation}
    \tau_u := \inf\left\{ t \geq 0 : u_t^2 < \frac{2 - \epsilon}{\eta} \right\}, \
    C_u := \frac{u_{\tau_u} - \beta u_{\tau_u - 1}}{1 - \beta}, \
    C_v := \frac{1 + \beta}{1 - \beta} \sum_{t = \tau_u}^\infty v_t^2.
\end{equation}

\subsection{Proof of Lemma~\ref{lem:u}}
\label{app:lemma-u}
Recall the update rule for $u$-coordinate:
\begin{equation}
    u_{t+1} - u_t = \beta (u_t - u_{t-1}) - \eta(1 + \beta) u_t v_t^2 \indicator[u_t \geq 0].
\end{equation}

We proceed by induction.
For $t = 1$, it is trivial.
For $1 < t < \tau_0$, we have that
\begin{equation*}
    u_{t+1} - u_t = \beta (u_t - u_{t-1}) - \eta(1 + \beta) u_t v_t^2
    \leq 0,
\end{equation*}
as $u_t \leq u_{t-1}$ and $\eta u_t v_t^2 > 0$.
If $\tau_0 = \infty$, then by monotone convergence theorem, $u_t$ converges to some $u_\infty \geq 0$.
If not, then for $t \geq \tau_0$, we can solve the recursion to obtain
\begin{equation*}
    u_t = u_{\tau_0} + \frac{\beta (u_{\tau_0 - 1} - u_{\tau_0})}{1 - \beta} (\beta^{t - \tau_0} - 1).
\end{equation*}
As $u_{\tau_0} < u_{\tau_0-1}$ by the definition of $\tau_0$, $u_t$ also monotonically decreases for $t \geq \tau_0$ (in a geometric speed), and we conclude by again applying the monotone convergence theorem.
\qed

\subsection{Proof of Theorem~\ref{thm:toy}}
\label{app:pf-toy}

By telescoping, we have that for any $t \geq \tau_u + 1$,
\begin{equation*}
    u_t - u_{\tau_u} = \beta (u_{t-1} - u_{\tau_u - 1}) - \eta(1 + \beta) \sum_{k = \tau_u}^{t-1} u_k v_k^2.
\end{equation*}
First, let $N \geq \tau_u$ be fixed and define $C_v(N) := \frac{1 + \beta}{1 - \beta} \sum_{t = \tau_u}^N v_t^2.$
Then, for $t \geq N + 1$,
\begin{equation*}
    u_t - \beta u_{t-1} = u_{\tau_u}- \beta u_{\tau_u - 1} - \eta(1 + \beta) \sum_{k = \tau_u}^{t-1} u_k v_k^2
    \leq (1 - \beta) C_u - \eta (1 - \beta) C_v(N) u_t.
\end{equation*}
We can rewrite the recursive inequality as
\begin{equation*}
    u_t - \frac{C_u}{1 + \eta C_v(N)} \leq \frac{\beta}{1 + \eta (1-\beta) C_v(N)} \left( u_{t-1} - \frac{C_u}{1 + \eta C_v(N)} \right).
\end{equation*}
To deal with possibly changing sign, we consider the first time in which the iterates pass another point:
\begin{equation}
    \tau_u'(N) := \inf\left\{ t \geq N + 1 : u_t < \frac{C_u}{1 + \eta C_v(N)} \right\}.
\end{equation}

If $\tau_u'(N) = \infty$, then we have that for all $t \geq N + 1$,
\begin{equation}
    u_t \leq \frac{C_u}{1 + \eta C_v(N)} + \left( \frac{\beta}{1 + \eta (1-\beta) C_v(N)} \right)^{t - N - 1} \left( u_N - \frac{C_u}{1 + \eta C_v(N)} \right).
\end{equation}

If not, then we have that for all $t \geq \tau_u'(N)$,
\begin{equation}
    u_t \leq \frac{C_u}{1 + \eta C_v(N)} - \left( \frac{\beta}{1 + \eta (1-\beta) C_v(N)} \right)^{t - \tau_u'(N)} \left( \frac{C_u}{1 + \eta C_v(N)} - u_{\tau_u'(N)} \right),
\end{equation}

In either case, we obtain the desired conclusion by taking the limit $\min\{N, t\} \rightarrow \infty$ with $t \geq N + 1$.
\qed

\subsection{Proof of Theorem~\ref{thm:toy-GD}}
\label{app:pf-toyGD}
We start by providing a nonasymptotic version of Theorem~\ref{thm:toy-GD}:
\begin{theorem}
    \label{thm:toy-GD-full}
    For sufficiently small $0 < v_0^2 < \epsilon \ll 1$, we have that
    \begin{equation}
        \lim_{t \rightarrow \infty} u_t \geq \sqrt{\frac{2}{\eta}} - \sqrt{P_{\tau_u} \exp\left( \frac{4\eta^2 u_{\tau_u}^2 P_{\tau_u}}{\epsilon(2 - \epsilon)} \right)},
    \end{equation}
    where $P_{\tau_u}$ is a quantity satisfying
    \begin{equation}
        P_{\tau_u} \leq \left( \frac{\epsilon}{\eta} + \frac{1}{2} v_0^2 \right) \exp\left( 2(2 + \epsilon) \sqrt{\frac{2 + \epsilon}{2 - \epsilon}} + 2(2 + \epsilon) \sqrt{\frac{2\epsilon}{2 - \epsilon}} \right).
    \end{equation}
\end{theorem}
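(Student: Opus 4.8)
The plan is to track the ``elliptical energy'' $P_t := (u_t - c)^2 + \tfrac12 v_t^2$ foreshadowed in the sketch of Theorem~\ref{thm:toy-GD}, where $c := \sqrt{2/\eta}$ (so $\eta c^2 = 2$) and $a_t := u_t - c$, and show it changes slowly. First I would dispose of two elementary reductions. (i) By Lemma~\ref{lem:u} (under $\tau_0 = \infty$), $u_t$ decreases to $u_\infty$; since $\eta u_0^2 = 2+\epsilon > 2$, the $v$-coordinate is initially amplified and $u_t$ strictly decreases, and in the nontrivial case $\tau_u < \infty$ it drops below $\sqrt{(2-\epsilon)/\eta} < c$, so $u_\infty < c$; moreover $v_t \to 0$ because $0 < \eta u_\infty^2 < 2$. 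Hence $P_\infty = a_\infty^2$ and $u_\infty = c - \sqrt{P_\infty}$, so it suffices to upper bound $P_\infty$. (ii) From $u_0^2 = \tfrac{2+\epsilon}{\eta}$ one computes $a_0^2 = u_0^2 - 2u_0 c + c^2 = \tfrac{4+\epsilon}{\eta} - \tfrac{2\sqrt{2(2+\epsilon)}}{\eta} \le \tfrac{\epsilon}{\eta}$ (using $\sqrt{2(2+\epsilon)} \ge 2$), so $P_0 \le \tfrac{\epsilon}{\eta} + \tfrac12 v_0^2$.

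The heart of the argument is an exact one-step identity, obtained by substituting $u_{t+1} = u_t(1-\eta v_t^2)$ and $v_{t+1} = v_t(1-\eta u_t^2)$ into $P_{t+1} - P_t$ and using $\eta c^2 = 2$: the terms that are first order in $a_t$ cancel, leaving
\[
  P_{t+1} - P_t \;=\; -\,\eta a_t^2 v_t^2 \;+\; \eta^2 u_t^2 v_t^4 \;+\; \tfrac12 v_t^2\bigl(\eta u_t^2 - 2\bigr)^2 ,
\]
which is quartic in $(a_t, v_t)$ -- the precise sense in which $P_t$ is ``approximately conserved''. Writing $(\eta u_t^2 - 2)^2 = \eta^2 a_t^2 (u_t+c)^2$, as long as the iterate stays near $(c,0)$ one bounds the right-hand side by $\gO(\eta^2 u_t^2)\,P_t v_t^2$ (using $a_t^2 \le P_t$, $v_t^2 \le 2P_t$, and $\eta u_t^2 = \gO(1)$), hence $P_{t+1} \le (1+r_t)P_t$ with $r_t = \gO(\eta^2 u_t^2 v_t^2)$. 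Iterating and using $\prod(1+r_t) \le \exp(\sum r_t)$ reduces the whole theorem to bounding $\sum_t v_t^2$ on the two phases $\{t \ge \tau_u\}$ and $\{t < \tau_u\}$.

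For the post-crossing phase $t \ge \tau_u$, monotonicity of $u_t$ forces $\eta u_t^2 < 2-\epsilon$, hence $(1-\eta u_t^2)^2 \le (1-\epsilon)^2$ and $v_t^2 \le v_{\tau_u}^2 (1-\epsilon)^{2(t-\tau_u)}$, so $\sum_{t \ge \tau_u} v_t^2 \le \tfrac{v_{\tau_u}^2}{\epsilon(2-\epsilon)} \le \tfrac{2P_{\tau_u}}{\epsilon(2-\epsilon)}$; feeding this and $u_t \le u_{\tau_u}$ into the multiplicative recursion yields $P_\infty \le P_{\tau_u}\exp\!\bigl(\tfrac{4\eta^2 u_{\tau_u}^2 P_{\tau_u}}{\epsilon(2-\epsilon)}\bigr)$, which with $u_\infty = c - \sqrt{P_\infty}$ is the first displayed inequality. (Incidentally this shows $C_v = \gO(1)$ for $\beta = 0$, the fact invoked after Theorem~\ref{thm:toy}.) For the warmup phase $0 \le t < \tau_u$ one has $\eta u_t^2 \in [2-\epsilon, 2+\epsilon]$ and the same recursion, and the essential move is to bound $\sum_{t < \tau_u} v_t^2$ by telescoping the $u$-update $u_{t+1}-u_t = -\eta u_t v_t^2$ rather than as $(\text{number of steps})\times(\max v_t^2)$ -- the latter would pay a spurious factor $1/v_0^2$. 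Since $u_t \ge \sqrt{(2-\epsilon)/\eta}$ for $t < \tau_u$,
\[
  \sum_{t < \tau_u} v_t^2 \;\le\; \frac{u_0 - u_{\tau_u}}{\sqrt{\eta(2-\epsilon)}} \;\le\; \frac{u_0}{\sqrt{\eta(2-\epsilon)}} \;=\; \frac{1}{\eta}\sqrt{\tfrac{2+\epsilon}{2-\epsilon}} ,
\]
and combining with $P_0 \le \tfrac{\epsilon}{\eta} + \tfrac12 v_0^2$ (used also to absorb the $\tfrac12 v_t^2(\eta u_t^2-2)^2$ piece, since $\epsilon^2/\eta \lesssim P_0$) and tracking the constants through the recursion yields the stated bound $P_{\tau_u} \le \bigl(\tfrac{\epsilon}{\eta}+\tfrac12 v_0^2\bigr)\exp\!\bigl(2(2+\epsilon)\sqrt{\tfrac{2+\epsilon}{2-\epsilon}} + 2(2+\epsilon)\sqrt{\tfrac{2\epsilon}{2-\epsilon}}\bigr)$. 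Chaining the two phases proves Theorem~\ref{thm:toy-GD-full}, and Theorem~\ref{thm:toy-GD} follows by substituting $\epsilon = o(1)$, $v_0^2 = \gO(\epsilon)$, $\eta = \Theta(1)$.

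The step I expect to be the main obstacle is that every per-step estimate above is a bootstrap: the geometric decay of $v_t^2$, the bound $(\eta u_t^2 - 2)^2 = \eta^2 a_t^2(u_t+c)^2 = \gO(\eta P_t)$, and the summability of $r_t$ all presuppose that $(u_t, v_t)$ has remained in a small neighborhood of the minimum $(c,0)$ -- which is exactly the conclusion. Making this rigorous requires an induction / first-exit-time argument: assuming $P_s$ is below a fixed small threshold for all $s \le t$, one certifies the same for $P_{t+1}$ using that the cumulative drift $\sum_{s \le t} r_s$ is controlled by the phase-independent quantity coming from the telescoping estimate above, and this is precisely where the smallness hypotheses ``$0 < v_0^2 < \epsilon \ll 1$'' are consumed. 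A secondary, purely bookkeeping nuisance is handling the transition point where $u_t$ passes below $c$ and pinning all constants to the exact closed forms in the statement; I expect this to be arithmetic rather than conceptual.
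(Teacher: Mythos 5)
Your proposal follows the paper's proof essentially line for line: same elliptical energy $P_t$, same per-step multiplicative bound (your exact one-step identity is algebraically equivalent to the expression in the proof of Lemma~\ref{lem:Pt-exp-bound}), same telescoping of the $u$-update on the warmup phase $t < \tau_u$, same geometric decay of $v_t^2$ after $\tau_u$, and the same inductive first-exit argument to maintain $u_t \ge 1/\sqrt{\eta}$. The only place you leave slack is the exact per-step drift constant $2\eta^2 u_t^2 v_t^2$, which the paper pins down via the elementary inequality of Lemma~\ref{lem:inequality} and which your rewriting $(\eta u_t^2-2)^2 = \eta^2 a_t^2(u_t+c)^2$ would in fact deliver just as directly after a short calculation under $u_t \ge 1/\sqrt{\eta}$.
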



\begin{proof}[Proof of Theorem~\ref{thm:toy-GD-full}]
    In contrast to the proof technique used for Theorem~\ref{thm:toy}, we utilize an energy argument that works for GD. 
    Inspired by the empirical observation that the GD iterates roughly form an ellipse centered around the point $\left( \sqrt{\frac{2}{\eta}}, 0 \right)$, we consider the following ``elliptical energy'' function:
    \begin{equation}
        P_t := \left( u_t - \sqrt{\frac{2}{\eta}} \right)^2 + \frac{1}{2} v_t^2.
    \end{equation}
    Note that $P_0 \leq \frac{\epsilon}{\eta} + \frac{1}{2} v_0^2$.
    We will first prove that the elliptical energy is well-bounded and then use that fact to lower bound $u_\infty$.

    Let us first fix $\epsilon, v_0^2 \in (0, 1)$.
    The following key lemma, whose proof is provided at the end of this section, states that the energy is approximately well-bounded, given that $u_t$ is sufficiently lower bounded:
    \begin{lemma}
    \label{lem:Pt-exp-bound}
        $P_{t+1} \leq P_t \exp\left( 2\eta^2 u_t^2 v_t^2 \right)$ for any $t \geq 0$ satisfying $u_t \ge \frac{1}{\sqrt{\eta}}$.
    \end{lemma}
    As $u_t^2 \geq u_{\tau_u-1}^2 \ge \frac{2-\epsilon}{\eta} > \frac{1}{\eta}$ for any $t \leq \tau_u - 1$ (due to Lemma~\ref{lem:u}), we have:
    \begin{align*}
        P_{\tau_u} &\leq \left( \frac{\epsilon}{\eta} + \frac{1}{2} v_0^2 \right) \exp\left( 2\eta^2 \sum_{t = 0}^{\tau_u - 1} u_t^2 v_t^2 \right) \tag{telescoping with Lemma~\ref{lem:Pt-exp-bound}} \\
        &\leq \left( \frac{\epsilon}{\eta} + \frac{1}{2} v_0^2 \right) \exp\left( 2\eta^2 u_0^2 \sum_{t = 0}^{\tau_u - 1} v_t^2 \right) \tag{Lemma~\ref{lem:u}} \\
        &= \left( \frac{\epsilon}{\eta} + \frac{1}{2} v_0^2 \right) \exp\left( 2(2 + \epsilon)\eta v_{\tau_u - 1}^2 + 2(2 + \epsilon) \sum_{t = 0}^{\tau_u - 2} \frac{u_t - u_{t+1}}{u_t} \right) \tag{$u_0 = \sqrt{\frac{2 + \epsilon}{\eta}}$, $u_t - u_{t+1} = \eta v_t^2 u_t$} \\
        &\leq \left( \frac{\epsilon}{\eta} + \frac{1}{2} v_0^2 \right) \exp\left( 2(2 + \epsilon)\eta v_{\tau_u - 1}^2 +  \frac{2 \sqrt{\eta} (2 + \epsilon)}{\sqrt{2 - \epsilon}} \sum_{t = 0}^{\tau_u - 2} (u_t - u_{t+1}) \right) \tag{$u_t^2 \geq \frac{2 - \epsilon}{\eta}$ for $t \leq \tau_u - 1$} \\
        &= \left( \frac{\epsilon}{\eta} + \frac{1}{2} v_0^2 \right) \exp\left( 2(2 + \epsilon)\eta v_{\tau_u - 1}^2 + \frac{2 \sqrt{\eta} (2 + \epsilon)}{\sqrt{2 - \epsilon}} (u_0 - u_{\tau_u - 1}) \right) \\
        &\leq \left( \frac{\epsilon}{\eta} + \frac{1}{2} v_0^2 \right) \exp\left( 2(2 + \epsilon)\eta v_{\tau_u - 1}^2 + 2(2 + \epsilon) \sqrt{\frac{2\epsilon}{2 - \epsilon}} \right) \tag{$\sqrt{a + b} - \sqrt{b} \leq \sqrt{a}$}.
    \end{align*}
    Also, we have that
    \begin{equation}
    \label{eqn:v-tau}
        v_{\tau_u - 1}^2 = \frac{u_{\tau_u - 1} - u_{\tau_u}}{\eta u_{\tau_u - 1}}
        \leq \frac{u_0}{\eta \sqrt{\frac{2 - \epsilon}{\eta}}}
        = \sqrt{\frac{2 + \epsilon}{2 - \epsilon}} \frac{1}{\eta}.
    \end{equation}
    Thus, we have that
    \begin{equation}
    \label{eqn:P-tau}
        P_{\tau_u} \leq \left( \frac{\epsilon}{\eta} + \frac{1}{2} v_0^2 \right) \exp\left( 2(2 + \epsilon) \sqrt{\frac{2 + \epsilon}{2 - \epsilon}} + 2(2 + \epsilon) \sqrt{\frac{2\epsilon}{2 - \epsilon}} \right).
    \end{equation}
    We now claim that
    \begin{equation}
        u_t \geq \frac{1}{\sqrt \eta} \ \text{ and } \ P_t \leq P_{\tau_u} \exp\left( \frac{4\eta^2 u_{\tau_u}^2 P_{\tau_u}}{\epsilon(2 - \epsilon)} \right), \quad \forall t \geq \tau_u - 1,
    \end{equation}
    which then implies our desired statement.

    We proceed by induction.
    The base case ($t = \tau_u - 1$) is trivial.
    For $t' \geq \tau_u$, suppose the statement holds for all $t < t'$.
    Again, using Lemma~\ref{lem:Pt-exp-bound}, we have that
    \begin{equation*}
        P_{t'} \leq P_{\tau_u} \exp\left( 2\eta^2 \sum_{t=\tau_u}^{t' - 1} u_t^2 v_t^2 \right)
        \leq P_{\tau_u} \exp\left( 2\eta^2 u_{\tau_u}^2 \sum_{t=\tau_u}^{t' - 1} v_t^2 \right).
    \end{equation*}
    
    Let us now bound $\sum_{t = \tau_u}^{t' - 1} v_t^2$.
    For $t \in [\tau_u, t' - 1]$, we have that $(\eta u_t^2 - 1)^2 < (1 - \epsilon)^2$ by the induction hypothesis, and thus, $v_{t+1}^2 < (1 - \epsilon)^2 v_t^2$.
    This implies that
     \begin{equation*}
        \sum_{t = \tau_u}^{t' - 1} v_t^2 < v_{\tau_u}^2 \sum_{t = 0}^{t' - \tau_u - 1} (1 - \epsilon)^{2t}
        \leq v_{\tau_u}^2 \sum_{t = 0}^\infty (1 - \epsilon)^{2t}
        = \frac{v_{\tau_u}^2}{\epsilon(2 - \epsilon)}
        \leq \frac{2 P_{\tau_u}}{\epsilon(2 - \epsilon)},
    \end{equation*}
    and thus, we have that $P_{t'} \leq P_{\tau_u} \exp\left( \frac{4\eta^2 u_{\tau_u}^2 P_{\tau_u}}{\epsilon(2 - \epsilon)} \right)$.
    From the definition of our elliptical energy, this then implies that
    \begin{equation}
        u_{t'} \geq \sqrt{\frac{2}{\eta}} - \sqrt{P_{\tau_u} \exp\left( \frac{4\eta^2 u_{\tau_u}^2 P_{\tau_u}}{\epsilon(2 - \epsilon)} \right)}.
    \end{equation}
    As $P_{\tau_u} = \gO(\epsilon)$ for small $\epsilon$ and $v_0^2 = \gO(\epsilon)$, with suitable choices we can conclude that $u_{t'} \geq \sqrt{\frac{1}{\eta}}$, and we are done.
\end{proof}

\begin{proof}[Proof of Lemma~\ref{lem:Pt-exp-bound}]
    This is shown via a brute-force computation:
    \begin{align*}
        P_{t+1} &= \left( u_{t+1} - \sqrt{\frac{2}{\eta}} \right)^2 + \frac{1}{2} v_{t+1}^2 \\
        &= \left( u_t - \eta u_t v_t^2 - \sqrt{\frac{2}{\eta}} \right)^2 + \frac{1}{2} \left( v_t - \eta v_t u_t^2 \right)^2 \tag{GD update} \\
        &= P_t - 2\eta u_t v_t^2 \left(u_t - \sqrt{\frac{2}{\eta}}\right) + \eta^2 u_t^2 v_t^4 - \eta v_t^2 u_t^2 + \frac{\eta^2}{2} v_t^2 u_t^4 \\
        &= P_t + u_t^2 v_t^2 \left( \eta^2 v_t^2 + \frac{\eta^2}{2} u_t^2 - 3\eta + \frac{2\sqrt{2\eta}}{u_t} \right).
    \end{align*}
    We then have the following helpful inequality, whose proof is deferred to the end:
    \begin{lemma}
    \label{lem:inequality}
        For $z \geq \frac{1}{\sqrt{\eta}}$, $\frac{\eta^2}{2} z^2 - 3\eta + \frac{2\sqrt{2\eta}}{z} \leq 2\eta^2 \left( z - \sqrt{\frac{2}{\eta}} \right)^2$.
    \end{lemma}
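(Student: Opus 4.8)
The plan is to clear denominators and eliminate $\eta$ by rescaling. First I would substitute $x := z\sqrt{\eta}$, so that the hypothesis $z \ge 1/\sqrt{\eta}$ becomes $x \ge 1$; dividing the claimed inequality through by $\eta$ and simplifying (using $\sqrt{2\eta}/z = \sqrt{2}\,\eta/x$ after the substitution, and $(z - \sqrt{2/\eta})^2 = (x-\sqrt{2})^2/\eta$) turns it into the $\eta$-free equivalent statement
\[
    \tfrac{1}{2}x^2 - 3 + \tfrac{2\sqrt{2}}{x} \le 2\bigl(x - \sqrt{2}\bigr)^2, \qquad x \ge 1 .
\]
Multiplying both sides by $2x > 0$ and collecting everything on one side, this is equivalent to showing $p(x) := 3x^3 - 8\sqrt{2}\,x^2 + 14x - 4\sqrt{2} \ge 0$ for all $x \ge 1$.

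The key step is the factorization of $p$. I would observe that equality in the lemma should be attained at the center $z = \sqrt{2/\eta}$ of the approximating ellipse, i.e.\ at $x = \sqrt{2}$, and then check that $x = \sqrt{2}$ is in fact a \emph{double} root of $p$ (e.g.\ by verifying $p(\sqrt{2}) = p'(\sqrt{2}) = 0$). Polynomial division then yields
\[
    p(x) = 3\bigl(x - \sqrt{2}\bigr)^2\Bigl(x - \tfrac{2\sqrt{2}}{3}\Bigr),
\]
which one confirms by a one-line expansion. Since $2\sqrt{2} < 3$, the remaining root satisfies $\tfrac{2\sqrt{2}}{3} < 1$, so for every $x \ge 1$ both $(x-\sqrt{2})^2 \ge 0$ and $x - \tfrac{2\sqrt{2}}{3} > 0$, giving $p(x) \ge 0$ and hence the lemma.

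I do not expect any genuine obstacle: once the substitution is made, the whole argument is elementary algebra. The only mild subtlety is spotting the factorization, but this is essentially forced — equality should hold at the ellipse center $x = \sqrt{2}$, a derivative check reveals the root is double, and after the polynomial division the sole remaining task is the trivial comparison $\tfrac{2\sqrt{2}}{3} < 1$.
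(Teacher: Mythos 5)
Your proof is correct and takes a genuinely cleaner route than the paper on the final step. You make the same reduction (rescale to eliminate $\eta$, then clear denominators to get the cubic $p(x) = 3x^3 - 8\sqrt{2}\,x^2 + 14x - 4\sqrt{2} \geq 0$ for $x \ge 1$), but you handle the cubic by noticing that $x=\sqrt{2}$ is a double root and exhibiting the factorization $p(x) = 3(x-\sqrt{2})^2(x - \tfrac{2\sqrt{2}}{3})$, after which positivity on $x \ge 1$ is immediate from $2\sqrt{2} < 3$. The paper instead argues via the shape of the cubic, computing its two critical points at $x = 7\sqrt{2}/9$ (local max, value $8\sqrt{2}/243 > 0$) and $x = \sqrt{2}$ (local min, value $0$). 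That argument is correct in spirit but, because the local max sits at $7\sqrt{2}/9 \approx 1.10 > 1$, it additionally requires checking the left endpoint $f(1) = 17 - 12\sqrt{2} > 0$ --- a razor-thin inequality ($289 > 288$). (The paper in fact writes ``$f(0) > 0$'', which is false since $f(0) = -4\sqrt{2}$; this is clearly a typo for $f(1) > 0$.) Your factorization sidesteps both the critical-point computation and the near-degenerate endpoint check, trading them for the trivial comparison $8 < 9$, so it is the tighter write-up.
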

    Using this and the given assumption that $u_t \geq \frac{1}{\sqrt{\eta}}$, we then have the desired statement as follows:
    \begin{equation*}
        P_{t+1} \leq P_t + u_t^2 v_t^2 \left( \eta^2 v_t^2 + 2\eta^2 \left( u_t - \sqrt{\frac{2}{\eta}} \right)^2 \right)
        = (1 + 2 \eta^2 u_t^2 v_t^2) P_t.
    \end{equation*}
\end{proof}
\begin{proof}[Proof of Lemma~\ref{lem:inequality}]
    By reparametrizing $z \gets z / \sqrt{\eta}$, it suffices to prove that for $z \geq 1$, $\frac{1}{2} z^2 - 3 + \frac{2\sqrt{2}}{z} \leq 2 \left( z - \sqrt{2} \right)^2.$
    By rearranging, this is equivalent to
    \begin{equation*}
        f(z) \triangleq 3z^3 - 8\sqrt{2} z^2 + 14z - 4\sqrt{2} \geq 0, \quad \forall z \geq 1.
    \end{equation*}
    This is then obvious, as $f$ is a cubic function with $f(0) > 0$, the local minimum of $(\sqrt{2}, 0)$ and the local maximum of $\left( \frac{7\sqrt{2}}{9}, \frac{8\sqrt{2}}{243} \right)$.
\end{proof}
\newpage

\newpage
\section{Additional Experimental Results}
\label{app:additional}
\paragraph{Additional Experimental Details} Experiments on nonlinear networks were carried out using an A6000. All other experiments were run locally on CPU. Experiments on nonlinear networks are based on code from \url{https://github.com/locuslab/edge-of-stability}~\citep{cohen2021stability}.

\subsection{Is Linear Warmup Necessary?}
\label{app:warmup-necessity}
\begin{figure}[!t]
    \centering
    \subfigure[Step warmup.]{
        \includegraphics[width=0.45\linewidth]{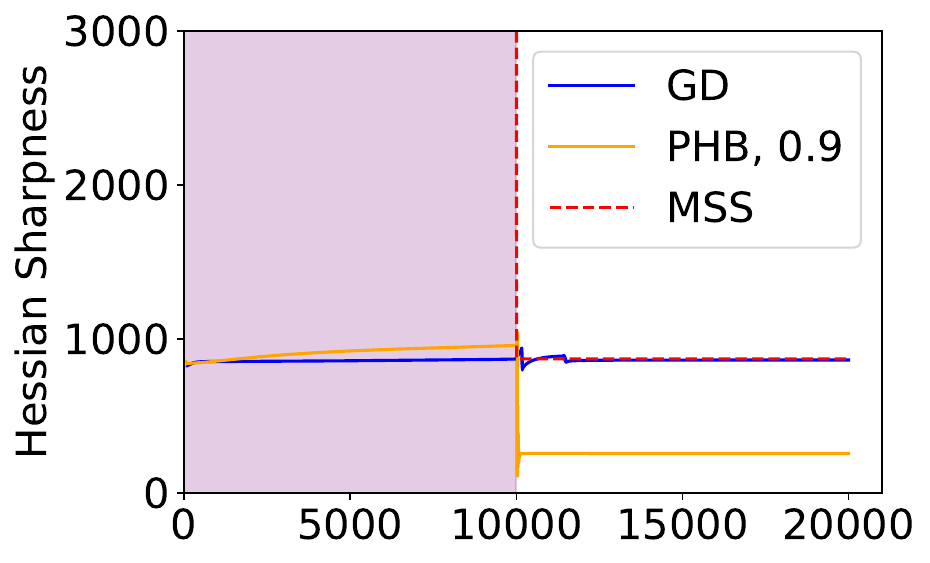}
        \label{fig:ldn-step-warmup}
    }
    \subfigure[Terminating the warmup.]{
        \includegraphics[width=0.45\linewidth]{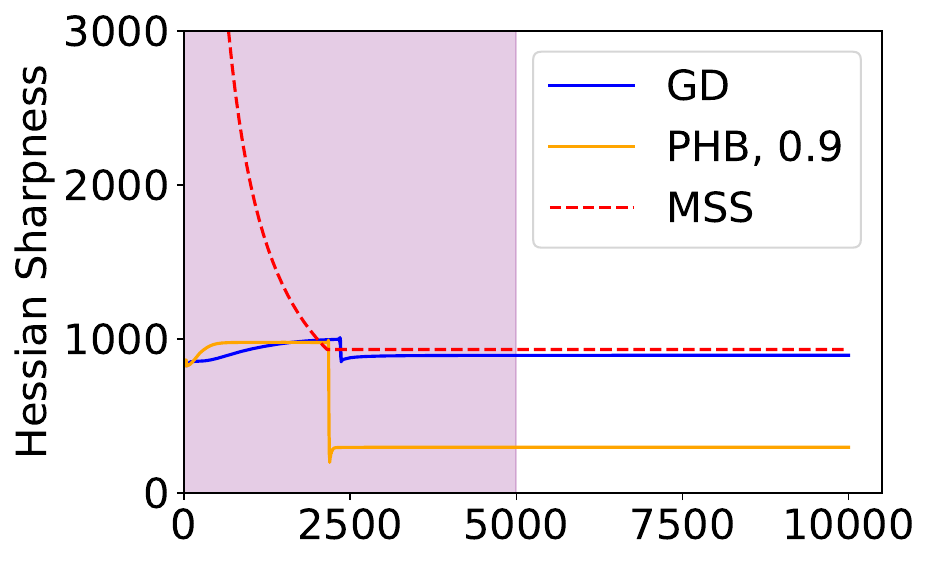}
        \label{fig:warmup-stop}
    }
    \caption{Ablations on the learning rate warmup. (a) Step warmup is used instead of linear warmup. (b) The warmup period is initially set to $5000$ iterations, but the warmup is terminated at iteration $2150$ around the iteration where the sharpness crosses the MSS.}
    \label{fig:ldn-warmup}
\end{figure}

Although we use linear warmup in our experiments, we emphasize that linear warmup is {\it not} necessary to induce the catapults.
As mentioned in the main text, we observe that the main criteria for inducing the catapults are (1) for the iterates to be in a neighborhood of a stable minimum and (2) for the current learning rate to be large enough that the minimum is unstable (in that the sharpness of the minimum is above the MSS) but not so unstable that training diverges.
Linear warmup satisfies these two criteria by (1) stably moving the iterates towards a minimum under a low learning rate and (2) automatically finding a suitably large learning rate (that does not lead to divergence) by gradually increasing the learning rate. However, as long as the two criteria are met, catapults can be induced without linear warmup.

\paragraph{Other Warmup in the LDN.}
To show that the specific form of the warmup is not essential in inducing the catapults, we train an LDN using a step warmup scheduler.
We use the learning rate of $10^{-5}$ for the first $10000$ iterations and then $0.0023$ for the remaining $10000$ iterations. Here, it is necessary to use a sufficiently long warmup period to ensure that the pre-catapult training loss is close to zero. 
As shown in Figure~\ref{fig:ldn-step-warmup}, this setting also induces a catapult despite not using linear warmup.
It should be noted that, unlike linear warmups, the final learning rate must be carefully tuned to prevent training from diverging. 

To show the effectiveness of the linear warmup in finding the appropriate scale of the learning rate for inducing catapults, we terminate the warmup as soon as the sharpness crosses the MSS, even before the prescribed warmup period ends.
As shown in Figure \ref{fig:warmup-stop}, this is enough to induce catapults for PHB, supporting our claim that linear warmup has the advantage of ``smoothly'' finding a suitable learning rate for inducing catapults.

\paragraph{Linear Warmup in the Toy Model.}
Conversely, although we use a fixed learning rate for the toy model, we show in Figure~\ref{fig:toy-model-warmup} that catapults still occur even when using linear warmup.

\begin{figure}[!h]
    \centering
    \includegraphics[width=0.7\textwidth]{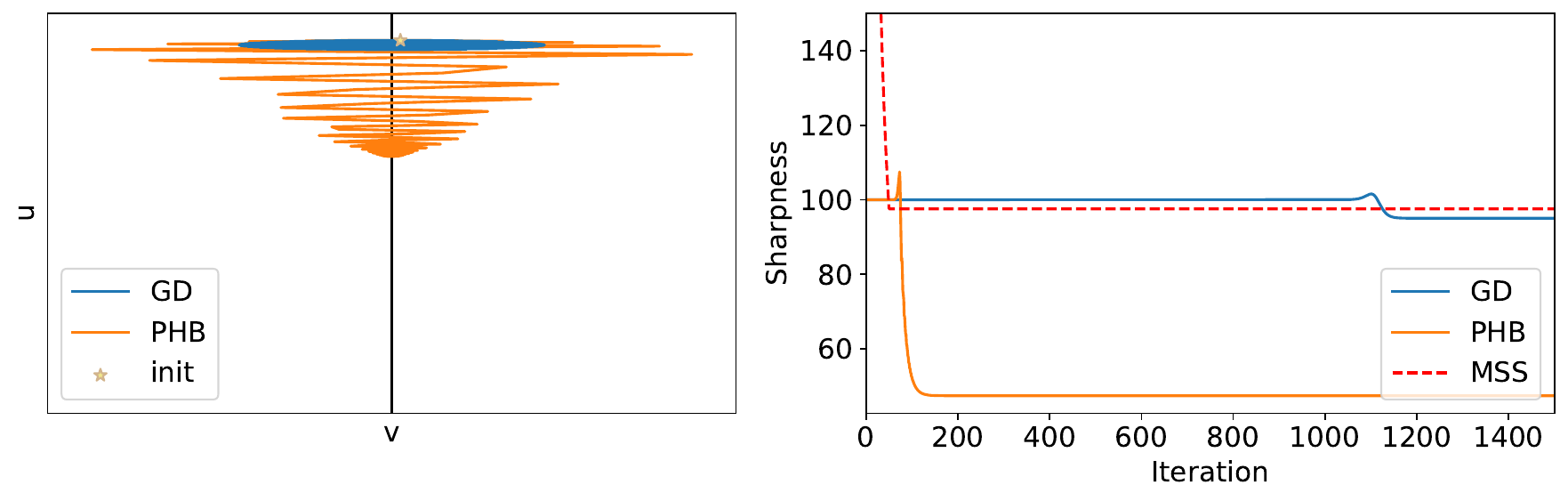}
    \caption{Training the toy model using linear learning rate warmup still induces a catapult}
    \label{fig:toy-model-warmup}
\end{figure}

\subsection{Effects of ``Overshooting'' by Larger Catapults}
\label{app:overshooting}
When the momentum parameter $\beta$ is large, and an even larger catapult can happen and PHB may carry the iterates farther than the flattest minima, resulting in ``overshooting.''
A good example can be found in the simple LDN loss $\gL(u,v) = \frac{1}{2}(u^2-v^2-1)^2$. As shown in Figure~\ref{subfig:ldn-overshooting}, momentum with $\beta = 0.99$ causes the iterates to move past the flattest minima $(u,v)=(1,0)$ and converge at a sharper solution. Overshooting may also explain why the final sharpness increases again in Figure~\ref{fig:LDN-main-phb} as $\alpha$ increases further: there is some optimal $\alpha$ for which momentum allows the iterate to reach the flattest possible minima, but increasing $\alpha$ even further may result in overshooting.

\begin{figure}[!h]
    \centering
    \subfigure[Theoretical lower bound on sharpness reduction based on toy example theory. At $\beta=0.99$, $\Delta S$ slightly decreases due to overshooting.]{
        \includegraphics[width=0.35\textwidth]{figures/ldn/ldn-2d-ub.pdf}
        \label{subfig:ldn-ub-bound}
    }
    \hfill
    \subfigure[Trajectory plot for $\beta=0.99$. The iterates overshoot past $(u,v)=(1,0)$ which is the flattest minima.]{
        \includegraphics[width=0.25\textwidth]{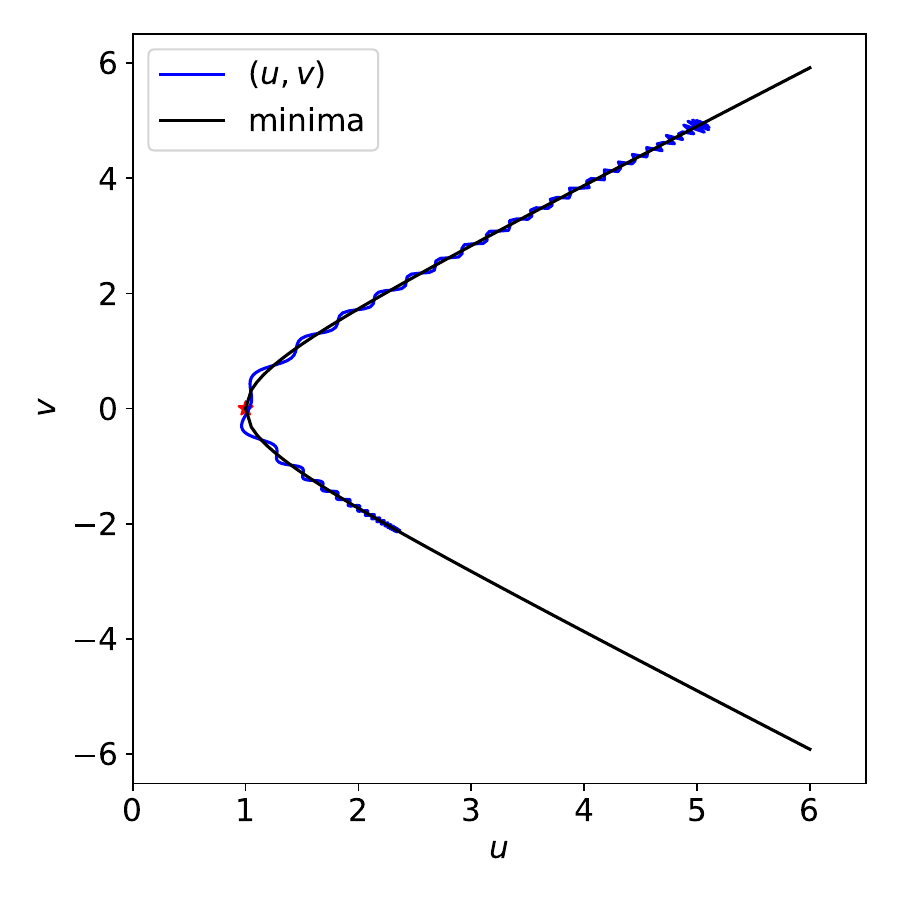}
        \label{subfig:ldn-overshooting}
    }
    \hfill
    \subfigure[In the {\bf ReLU Scalar Network}, overshooting cause by momentum can lead to the neuron dying.]{
        \includegraphics[width=0.25\textwidth]{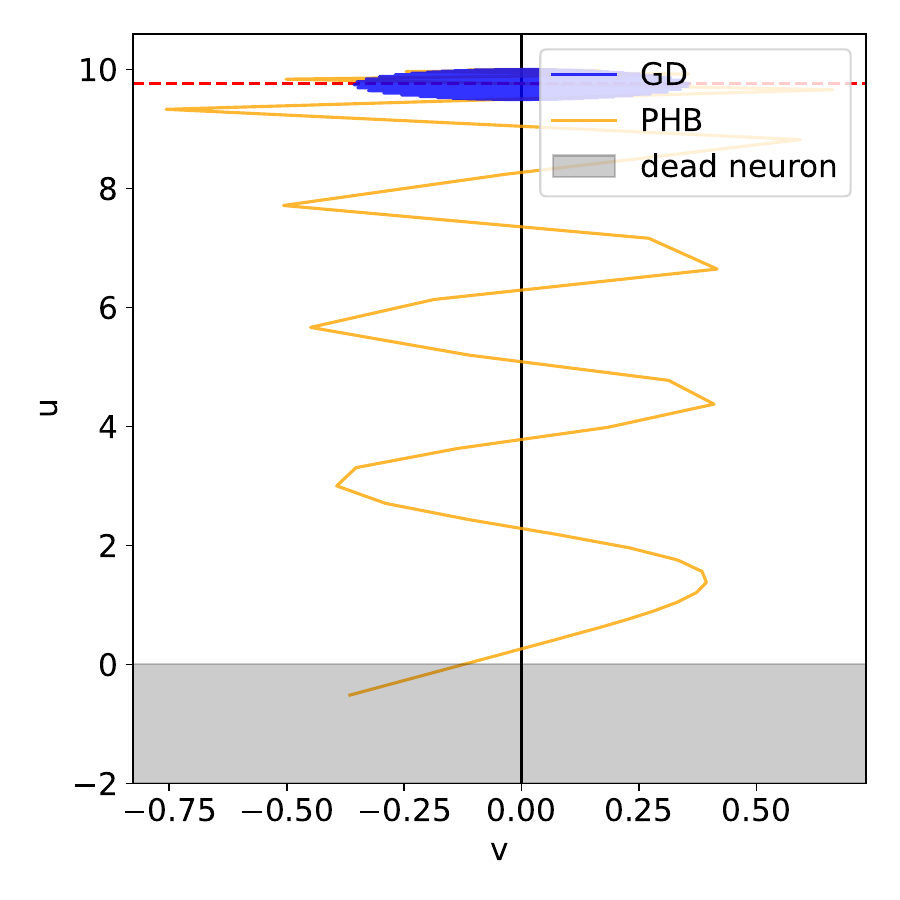}
        \label{subfig:scalar-overshooting}
    }
    \caption{Overshooting in simple models}
    \label{fig:simple-overshooting}
\end{figure}

As the loss landscape and manifold of minima become increasingly complex, overshooting may result in different outcomes, depending on the architectures. 
Another effect of overshooting relates to the lack of progressive sharpening in some of the experiments. 
In some of our experiments (Figures~\ref{fig:cifar10-exp}(a) and \ref{fig:cifar10-exp}(b)), we observed an interesting phenomenon where PHB does not display progressive sharpening after a few large catapults.
We conjecture this lack of progressive sharpening to be a result of large catapult inducing more dead neurons. Indeed, through a synthetic experiment with ReLU FCN, we show that large catapults due to PHB induce more dead neurons than GD; see Figure~\ref{fig:deadrelu}. Although seemingly unrelated, overshooting is one possible explanation as to why momentum induces more dead neurons. To illustrate this point, consider the case of the ReLU scalar network in Figure~\ref{subfig:scalar-overshooting}. Under certain settings, momentum can cause the iterates to overshoot the flattest possible minima $(u=0)$ and land in the region $u < 0$ which kills the ReLU neuron. A similar phenomenon could occur in more realistic networks as well.
\begin{figure}[!ht]
    \centering
    \subfigure[Sharpness.]{
        \includegraphics[width=0.46\linewidth]{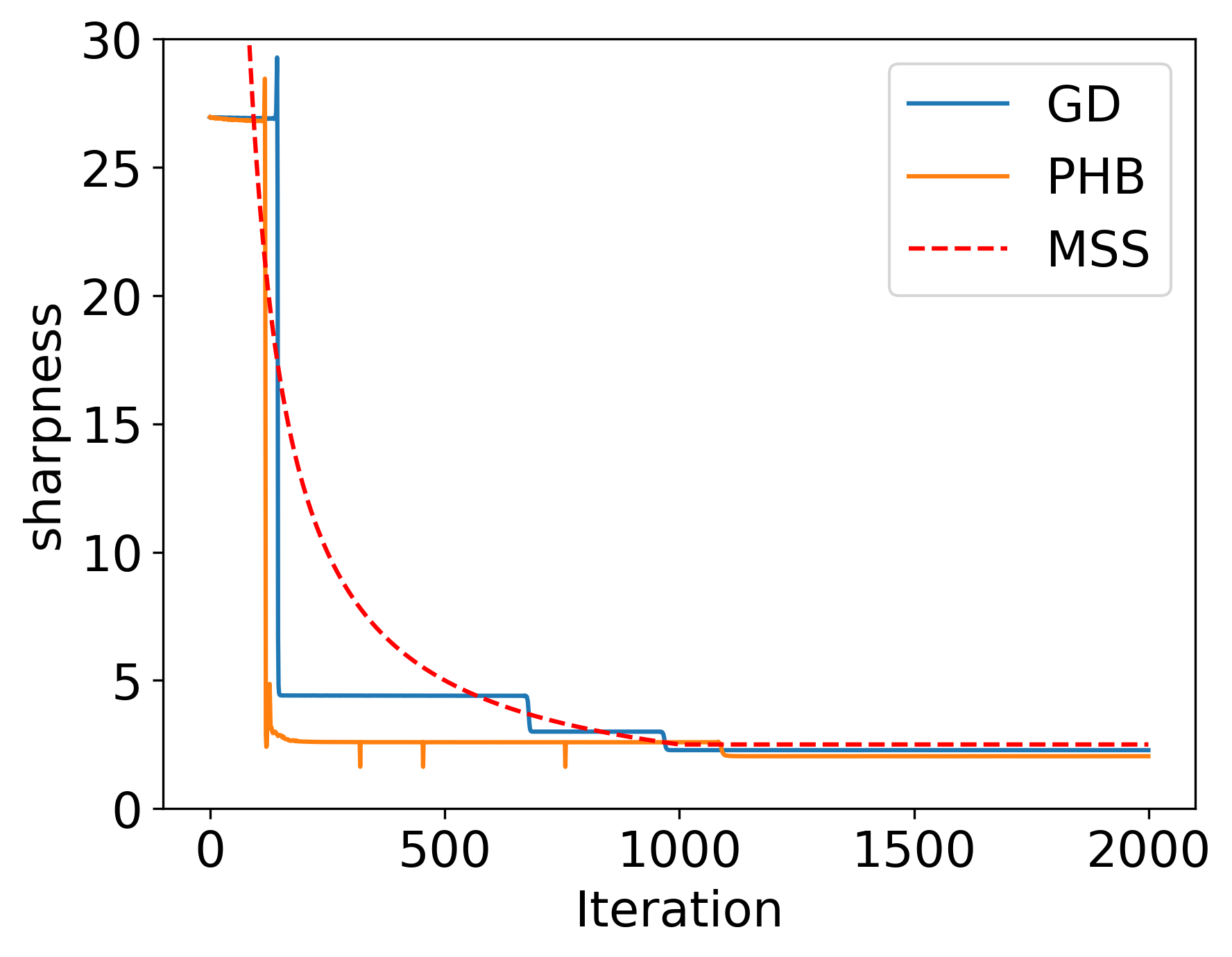}
        \label{fig:deadrelu-sharpness}
    }
    \subfigure[Number of active neurons.]{
        \includegraphics[width=0.46\linewidth]{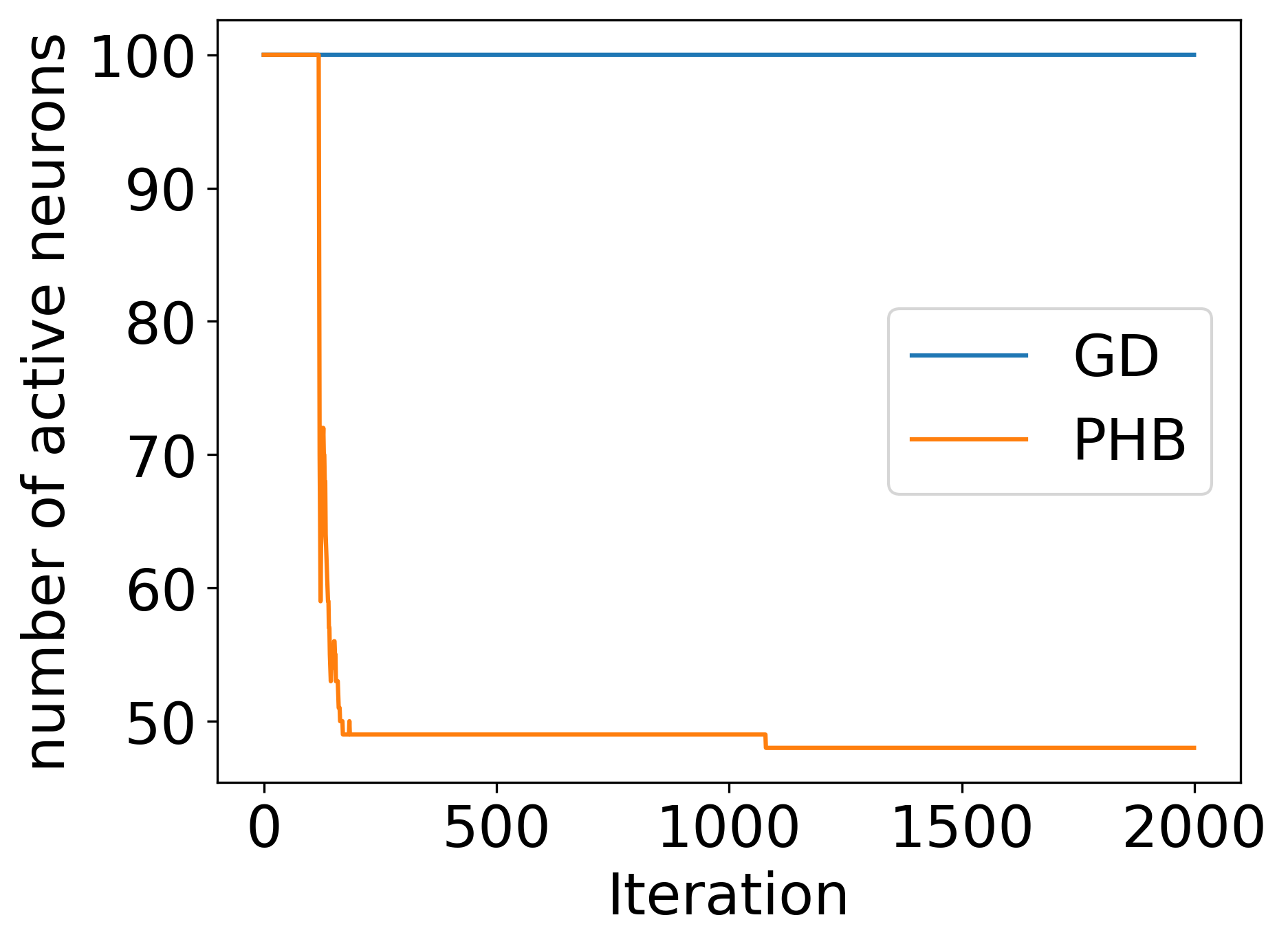}
        \label{fig:deadrelu-neurons}
    }
    \caption{2-layer FCN of width $100$ trained with MSE loss and rank-2 synthetic dataset~\cite{zhu2023catapult}, generated as follows: $\{(\vx_i, y_i)\}_{i=1}^{100}$ with $\vx_i \sim \gN_3(\vzero, \mI)$ and $y_i = (\vx_i)_1 (\vx_i)_2$.}
    \label{fig:deadrelu}
\end{figure}

\subsection{More experiments on Nonlinear Neural Networks}
\label{app:modern}

\subsubsection{Additional FCN Experiments}
For the nonlinear neural network experiment in Figure~\ref{fig:rank2}, we follow the setting of \citet{zhu2023catapult}.
We train a fully-connected 3-layer ReLU network of width $64$ on the synthetic rank-2 dataset.
The synthetic rank-2 dataset is generated by i.i.d. sampling data $\pmb{x}_i \sim \mathcal{N}(\pmb{0}, \mI_{d})$ and generating outputs $y_i = \pmb{x}_i^{(1)} \pmb{x}_i^{(2)}$ (product of the first two coordinates of $\pmb{x}_i$).
A rank2-$D$-$N$ dataset refers to the synthetic rank-2 dataset generated using $d=D$ whose training set consists of $N$ data points; in our experiment, we used a rank2-400-200 dataset.
For both experiments, we used a momentum rate of $\beta=0.9$ and MSE loss. \\

\begin{figure*}[!ht]
\centering
\includegraphics[width=0.31\textwidth]{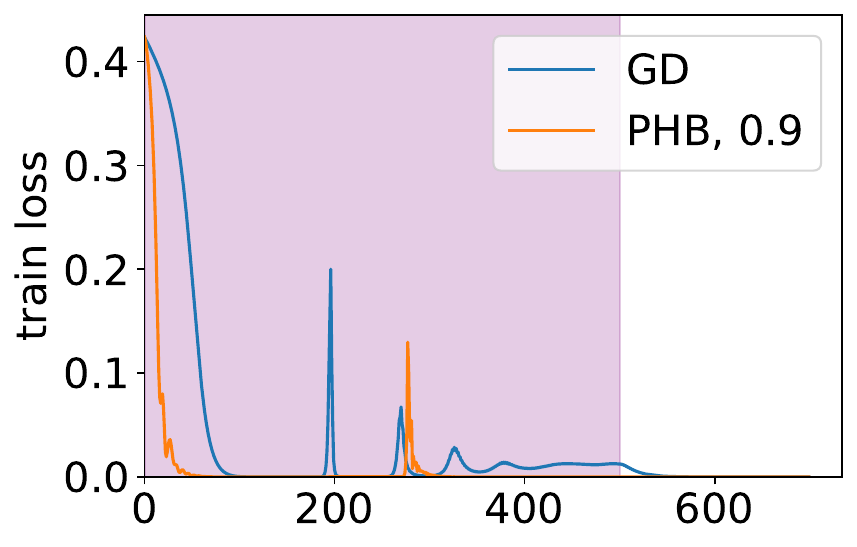}
\hfill
\includegraphics[width=0.31\textwidth]{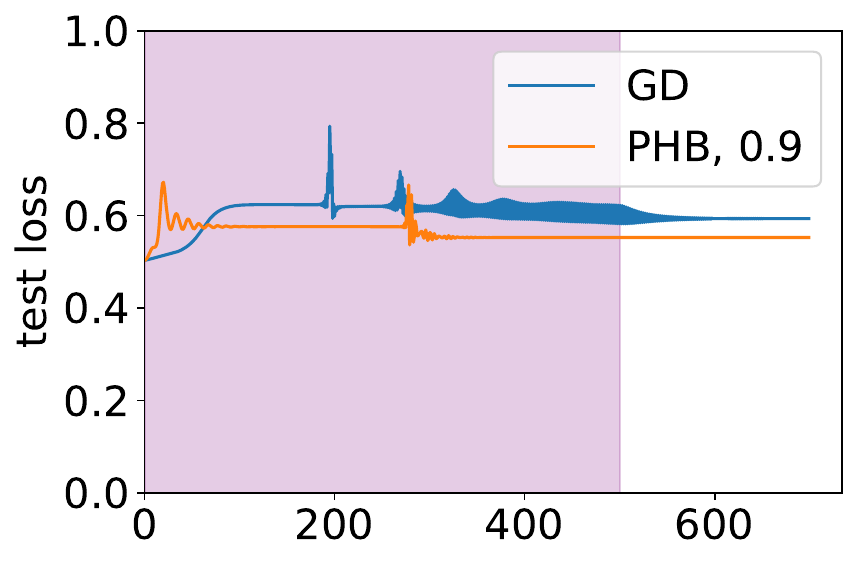}
\hfill
\includegraphics[width=0.3\textwidth]{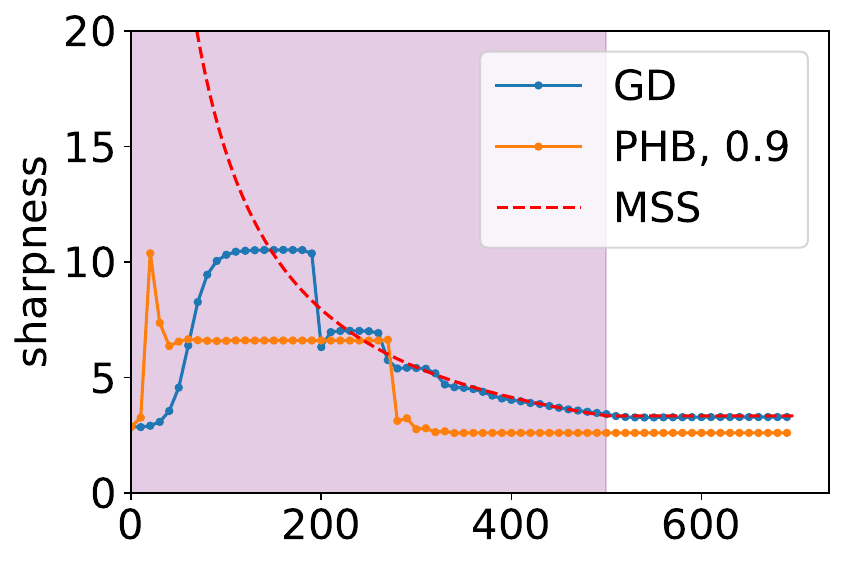}


\caption{FCN trained on the Rank2-400-200 dataset, $\eta_i = 0.02$, $\eta_f = 0.6$.
}
\label{fig:rank2}
\end{figure*}


\subsubsection{ResNet20 Experiments}
In Figure~\ref{fig:resnet}, we provide additional experiments on deep neural networks using the ResNet20 architecture. We observe that large catapults also occur for ResNet20 as well. \\

\begin{figure*}[!ht]
    \subfigure[2-class 128 subset of CIFAR10, $\eta_i=0.01$, $\eta_f=0.02$]{
        \includegraphics[width=0.29\textwidth]{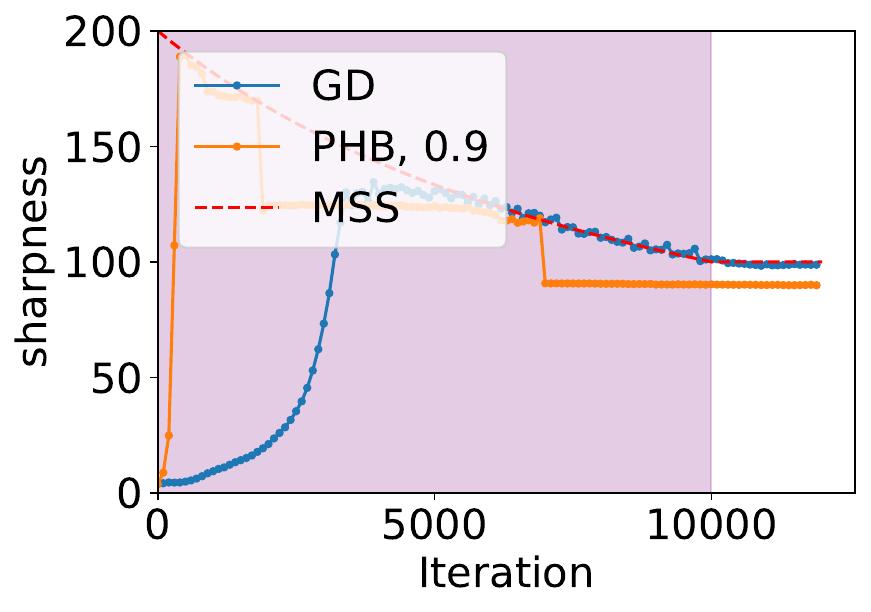}
    }
    \hfill
    \subfigure[1k subset of CIFAR10, $\eta_i=0.01$, $\eta_f=0.1$]{
        \includegraphics[width=0.31\textwidth]{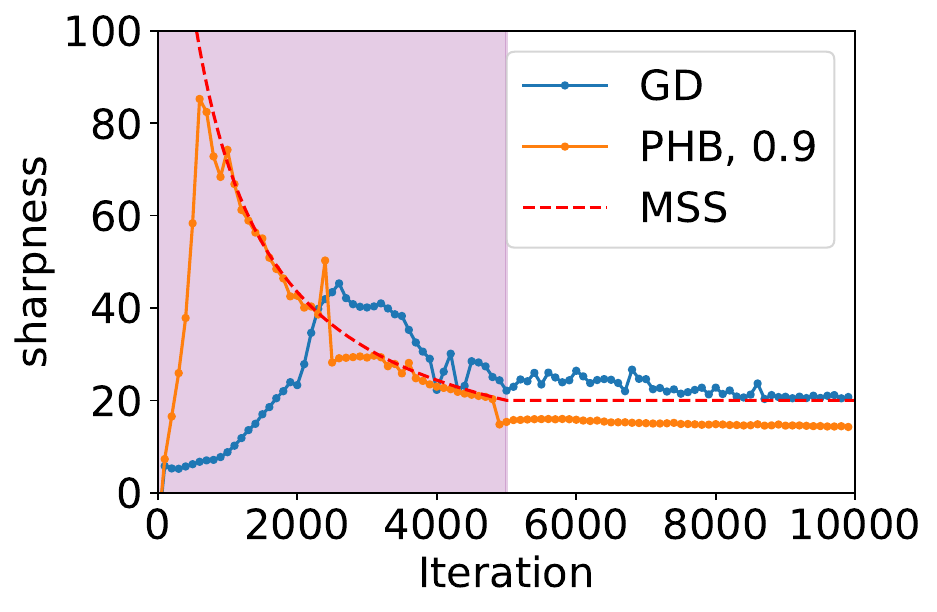}
    } 
    \hfill
    \subfigure[5k subset of CIFAR10, $\eta_i=0.01$, $\eta_f=0.1$]{
        \includegraphics[width=0.31\textwidth]{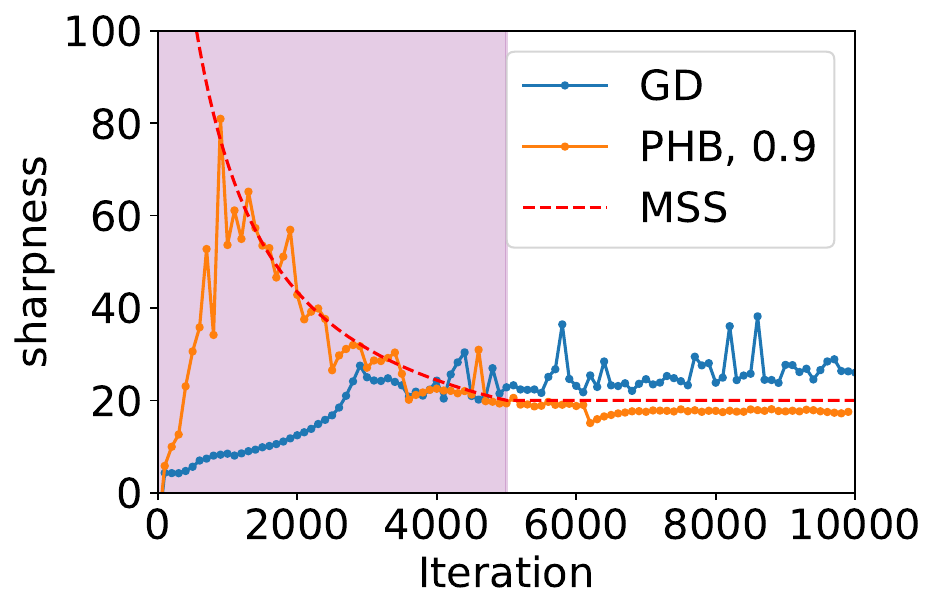}
    }
    \caption{Results for ResNet20. The shaded region is the linear warmup period. 
    }
    \label{fig:resnet}
\end{figure*}

\subsubsection{Cross-Entropy Loss Experiments}
\label{app:ce-exp}
We provide additional experiments showing that large catapults still occur for networks trained using cross-entropy loss. As shown in Figure~\ref{fig:ce-fcn}, for PHB, large catapults occur during early training for FCN trained using Tanh activation and cross-entropy loss. Due to using cross-entropy loss, the iterates also converge quickly to a minimum with near-zero sharpness after the catapult. \\ 

Furthermore, for PHB, the iterates quickly converge to a minimum with near-zero sharpness right after the large catapult whereas for GD the iterates do not immediately converge to a flat minimum after the catapult. Instead, the sharpness of GD iterates settles at the MSS after the catapult before slowly converging towards flatter minima as training progresses.
\begin{figure}[!h]
    \centering
    \includegraphics[width=0.5\textwidth]{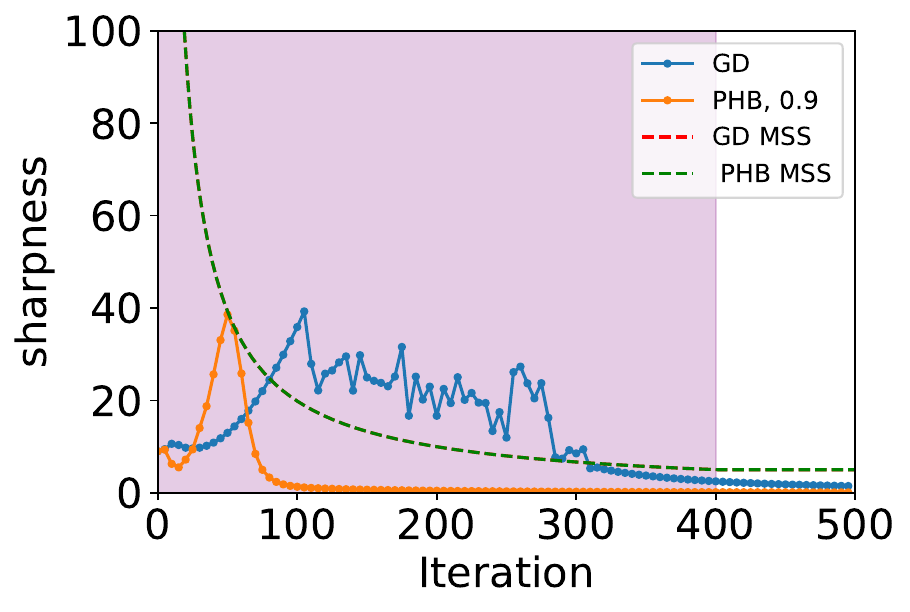}
    \caption{FCN Experiments using \emph{cross-entropy loss} on CIFAR10-1k, with Tanh activation, width 100, $\eta_i=0.001$, $\eta_f=0.4$}
    \label{fig:ce-fcn}
\end{figure}

\newpage
\subsection{Additional Results for $\beta = 0.99$}
\label{app:099}
We redo experiments with $\beta = 0.99$.
Overall, the trend is the same, with the effect of momentum more amplified.
We provide the necessary details and some discussions for each experiment redone.

\subsubsection{Linear Diagonal Networks}
\label{app:beta=0.99-ldn}
Here, we redo the experiments of Section~\ref{sec:ldn} with $\beta = 0.99$, where the results are reported in Figure~\ref{fig:LDN-0.99}.
Note how we expanded the range of $\alpha$'s to see the effect of momentum, which seems to be a bit ``delayed''.
But, at the same time, there is little instability in the trend in that once the curve reaches zero test loss, it stays there; this is in contrast to our $\beta = 0.9$ experiment (Figure~\ref{fig:LDN-main-phb} of Section~\ref{sec:ldn}), where there were some instabilities over $\alpha$'s.

\begin{figure}[!ht]
\centering
\subfigure[PHB Test Loss]{
\includegraphics[width=0.31\textwidth]{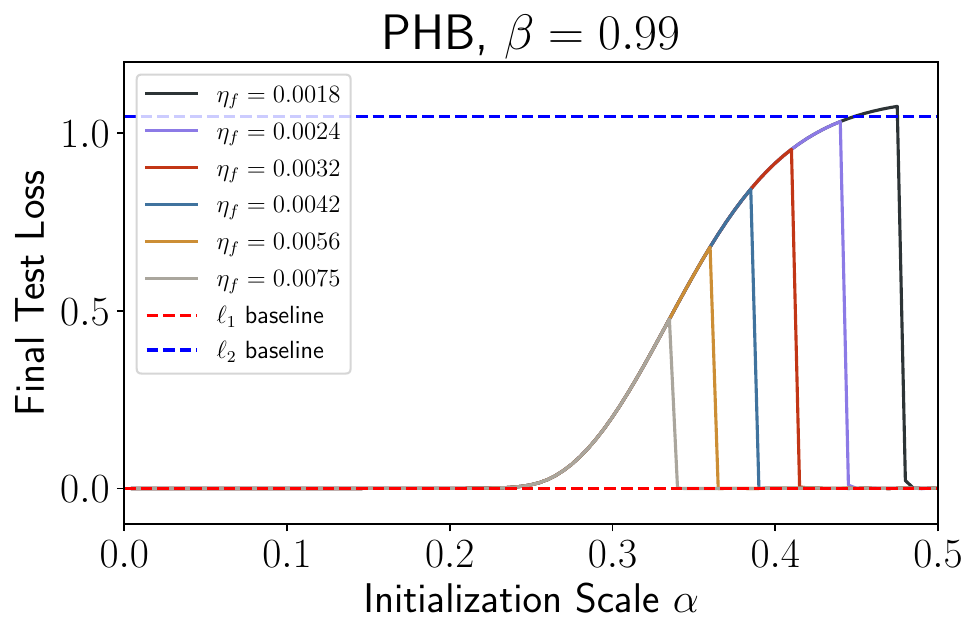}
\label{fig:LDN-testloss-0.99}
}
\subfigure[PHB Sharpness]{
\includegraphics[width=0.31\textwidth]{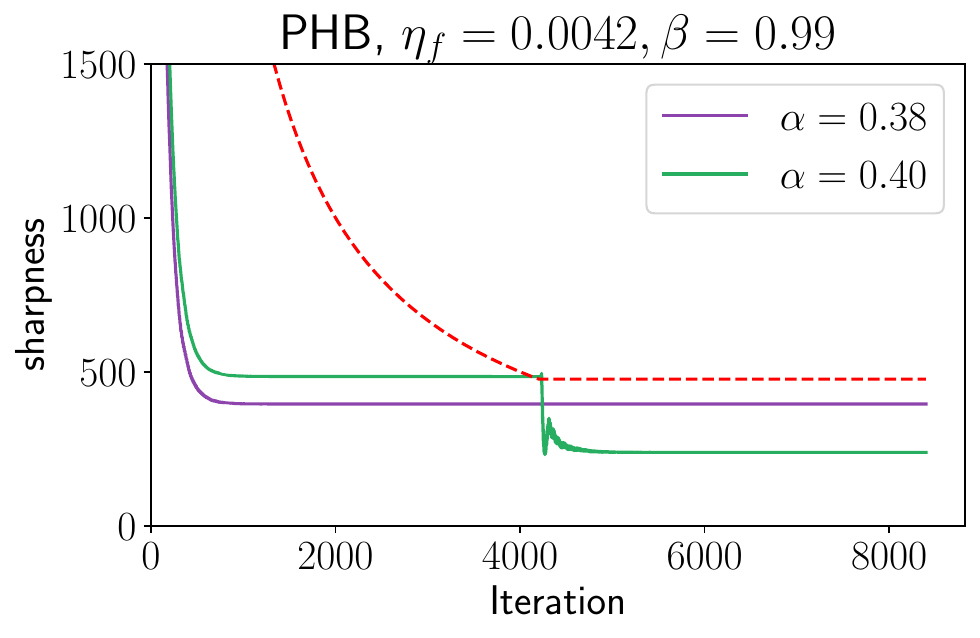}
\label{fig:LDN-sharpness-0.99}
}
\subfigure[PHB Log Train Loss]{
\includegraphics[width=0.31\textwidth]{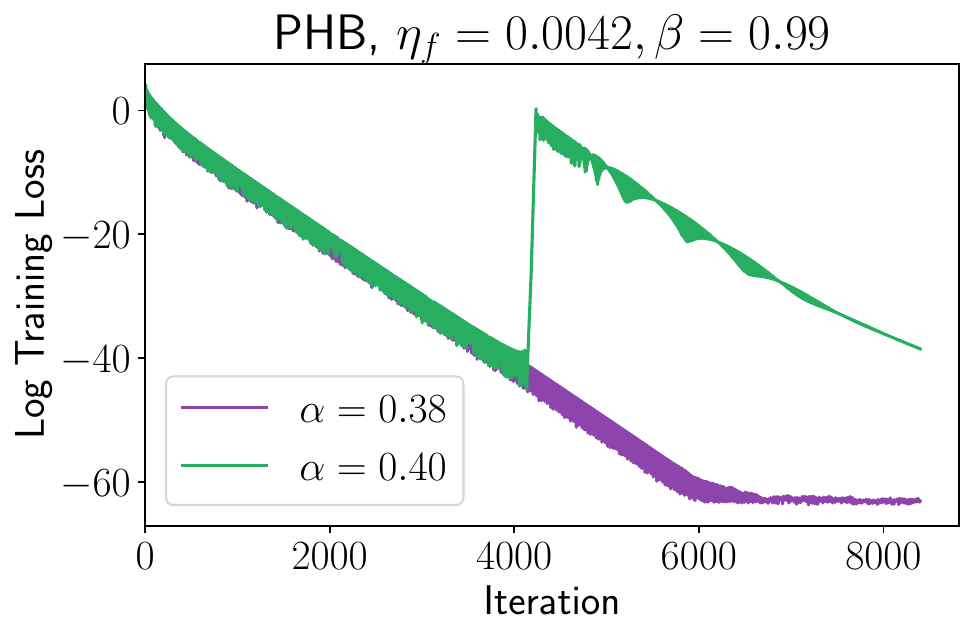}
\label{fig:LDN-trainloss-0.99}
}

\caption{Recall from Section~\ref{sec:ldn} that ``$\ell_1$ baseline'' and ``$\ell_2$ baseline'' in Figure~\ref{fig:LDN-testloss-0.99} stand for the solution with the minimal $\ell_1$ norm and the solution with the minimal $\ell_2$ norm to the regression problem, respectively.}
\label{fig:LDN-0.99}
\end{figure}




\subsubsection{Shallow Nonlinear Neural Networks}
\label{app:beta=0.99-nonlinear}
For nonlinear networks, momentum with $\beta=0.99$ has very unstable training dynamics when trained on a small dataset. Here, we show results for $\beta=0.99$ on larger datasets. For the CIFAR10 experiments, we train on (1) a subset of CIFAR10 with 2 classes and \emph{2000} training images and (2) a larger subset of CIFAR10 with 10 classes and \emph{5000} training images. Results for the 2-class CIFAR10 are shown in Figure~\ref{fig:relu-cifar10-2k-fcn}, and results for the 5k subset of CIFAR10 are shown in Figure~\ref{fig:relu-fcn-beta0.99}. An additional observation is that although PS is exhibited after the large catapults when using $\beta=0.9$, no PS occurs after the large catapults when using $\beta=0.99$ For the synthetic rank-2 dataset, we use a Rank2-400-4000 dataset. Results for the synthetic rank-2 dataset are shown in Figure~\ref{fig:rank2-400-4000}. 

\begin{figure}[!ht]
    \centering
        \includegraphics[width=0.31\textwidth]{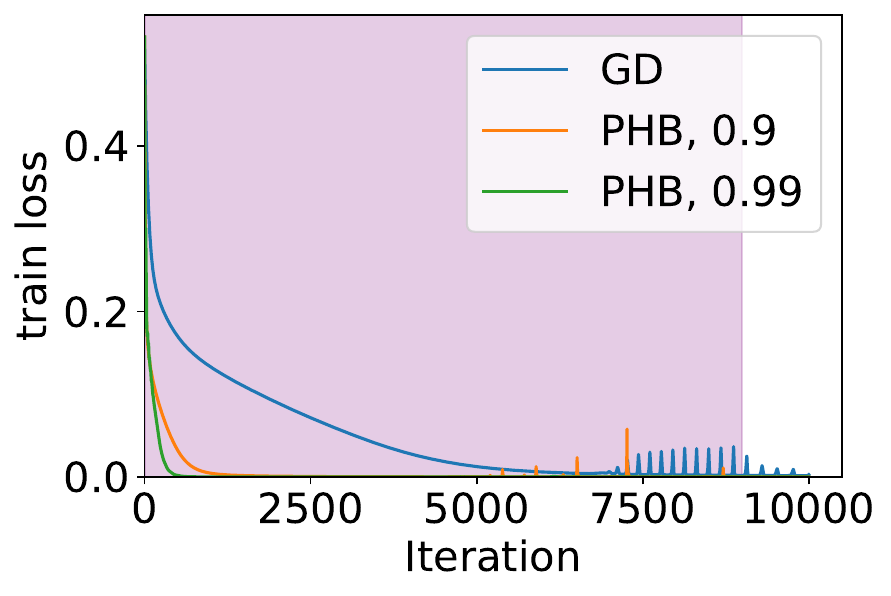}
        \includegraphics[width=0.31\textwidth]{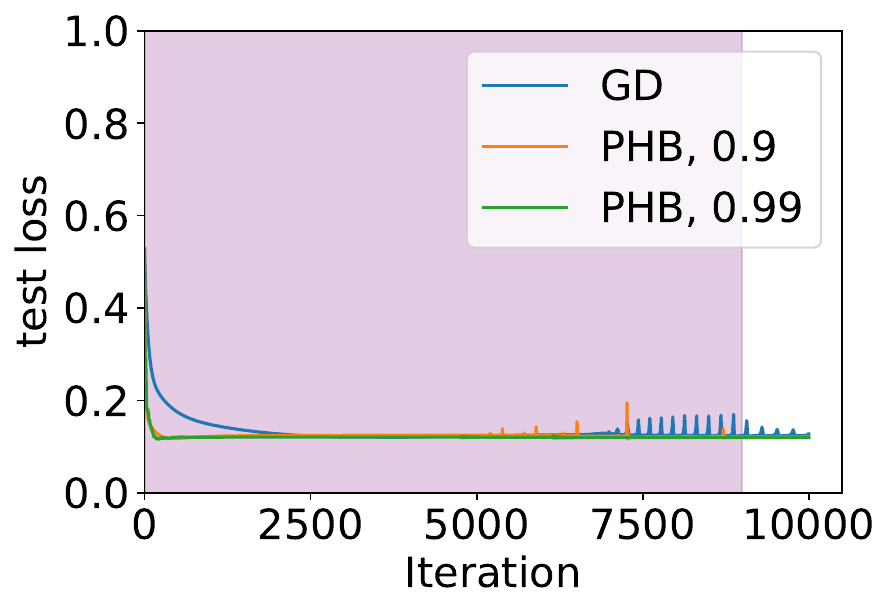}
        \includegraphics[width=0.31\textwidth]{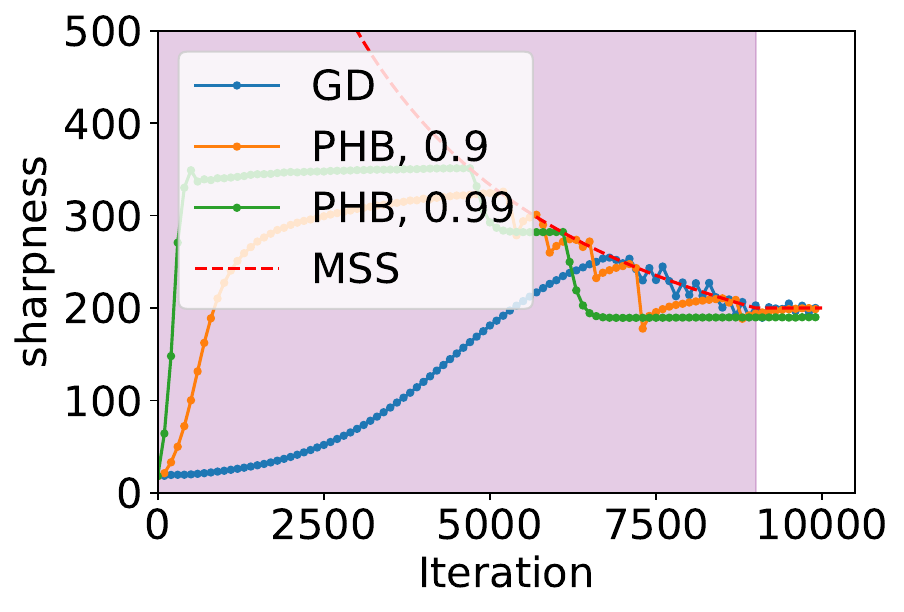}
    \caption{3-layer width-256 FCN trained on a 2k-datapoint subset of CIFAR10 with MSE loss, $\eta_i=0.001$, $\eta_f=0.01$ and 9000 steps of warmup.}
    \label{fig:relu-cifar10-2k-fcn}
\end{figure}

\begin{figure}
    \subfigure[$\eta_i=0.001$, $\eta_f=0.01$]
    {
    \includegraphics[width=0.31\textwidth]{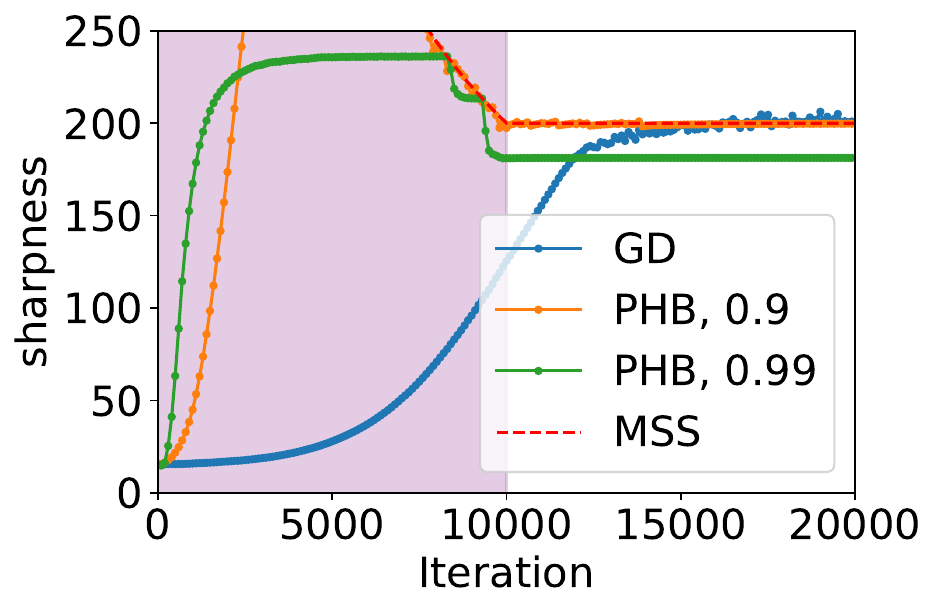}
    }
    \hfill
    \subfigure[$\eta_i=0.001$, $\eta_f=0.03$]
    {
    \includegraphics[width=0.31\textwidth]{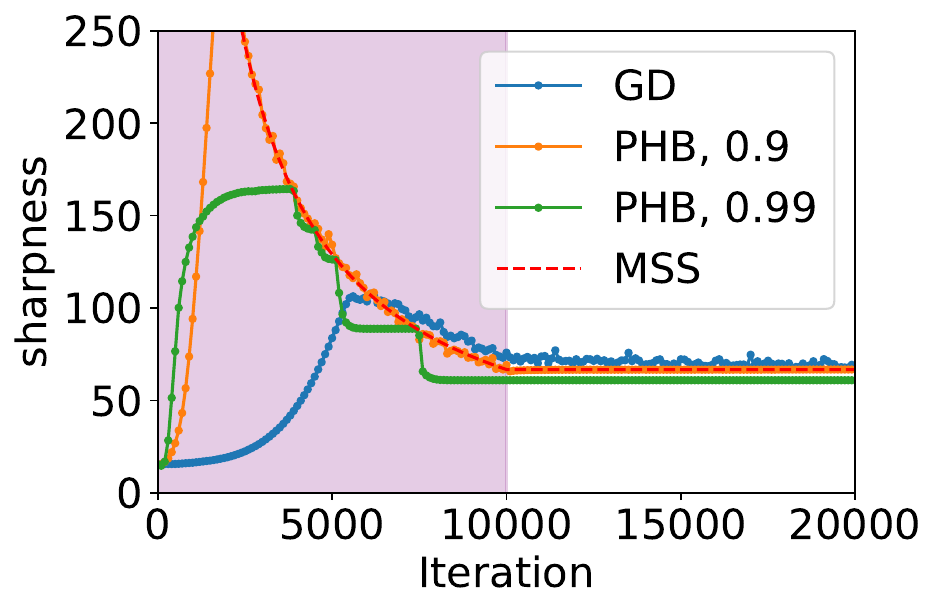}
    }
    \hfill
    \subfigure[$\eta_i=0.001$, $\eta_f=0.05$]
    {
    \includegraphics[width=0.31\textwidth]{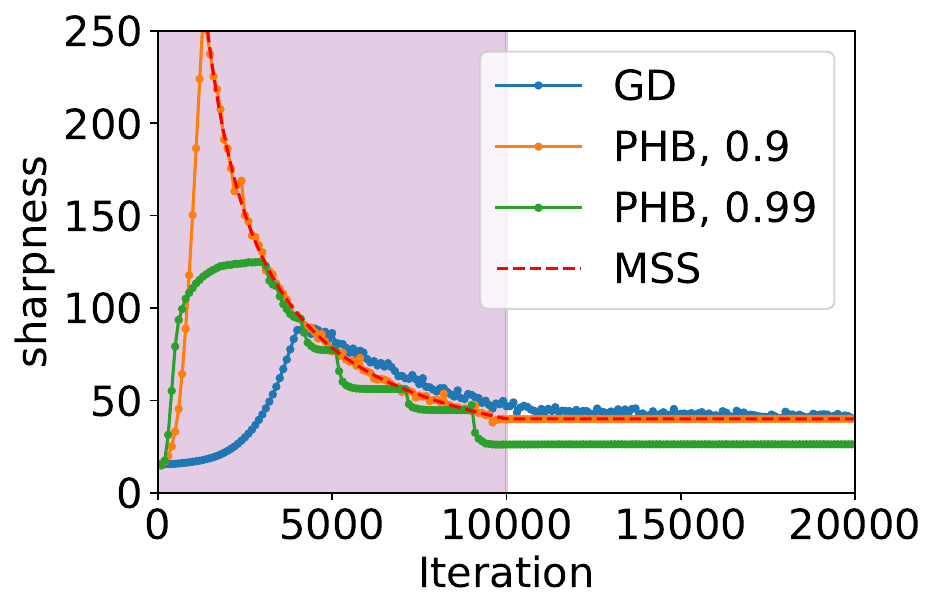}
    }
    \caption{3-layer width-200 FCN trained on a 5k-datapoint subset of CIFAR10 using MSE loss and 10000 steps of warmup}
    \label{fig:relu-fcn-beta0.99}
\end{figure}

\begin{figure}[!h]
    \centering
    \includegraphics[width=0.31\textwidth]{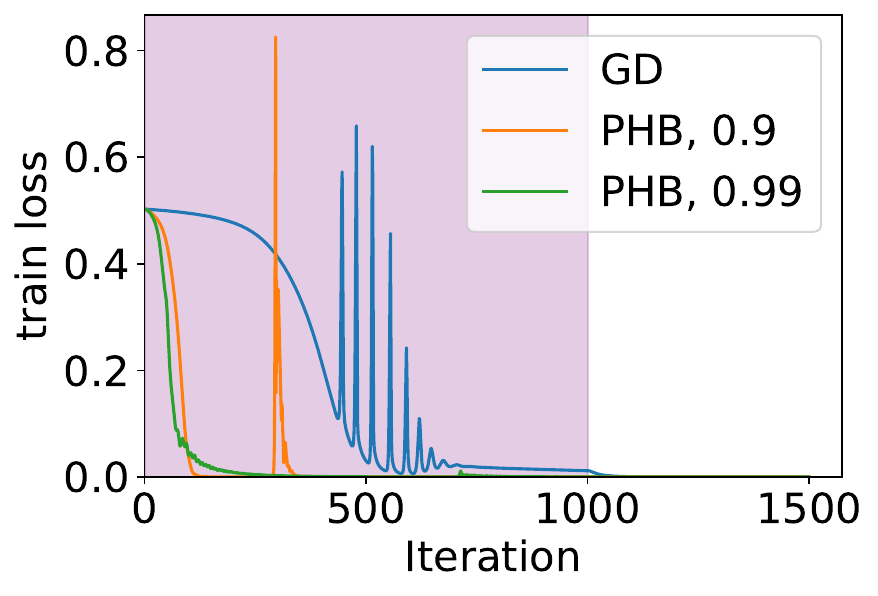}
    \includegraphics[width=0.31\textwidth]{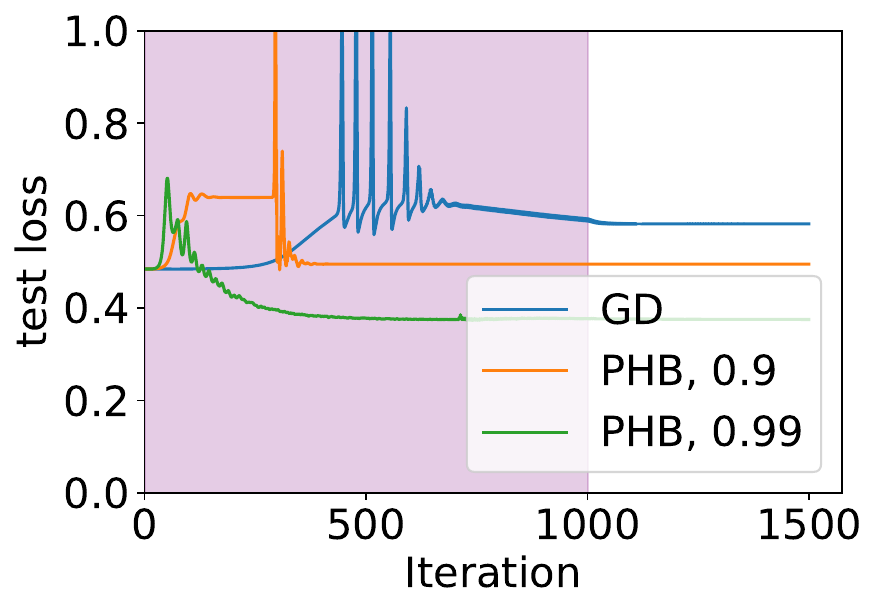}
    \includegraphics[width=0.31\textwidth]{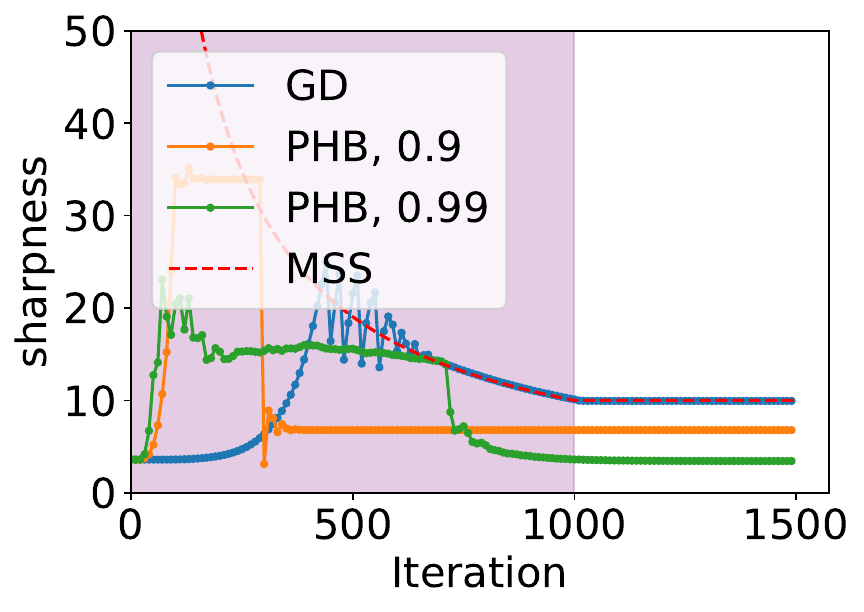}
    \caption{Rank2-400-4000, width=128, MSE loss, $\eta_i=0.01$, $\eta_f=0.2$}
    \label{fig:rank2-400-4000}
\end{figure}

\subsubsection{ResNet20}
The catapults are more pronounced when training ResNet20 using momentum with $\beta=0.99$ as shown in Figure \ref{fig:resnet20-beta0.99}.

\begin{figure}
    \centering
    \subfigure[ResNet20 trained on a 5k-datapoint subset of CIFAR10 with $\eta_i=0.01$, $\eta_f=0.1$, and 5000 steps of warmup]
    {
    \includegraphics[width=0.31\textwidth]{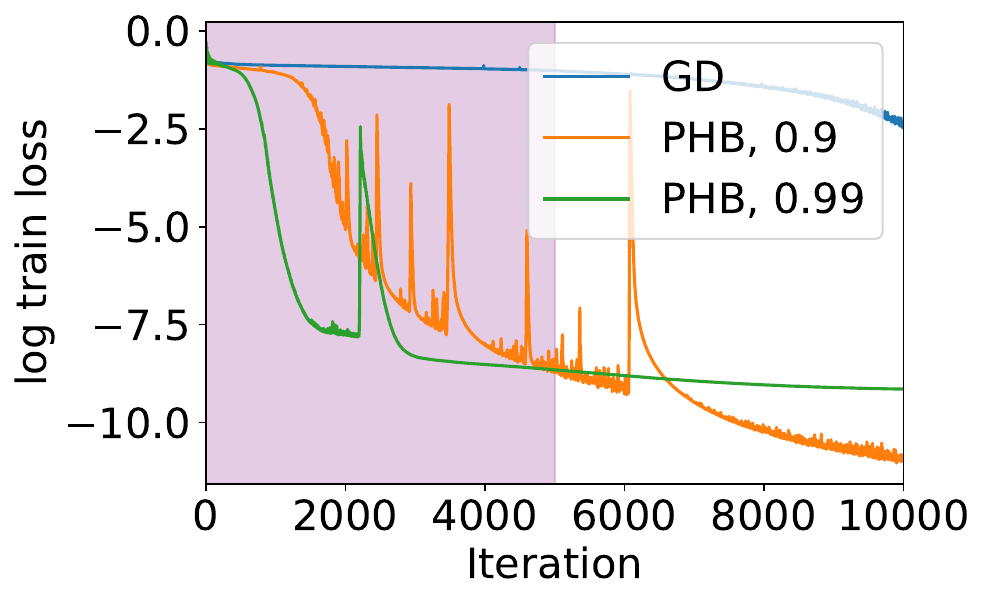}
    \hfill
    \includegraphics[width=0.31\textwidth]{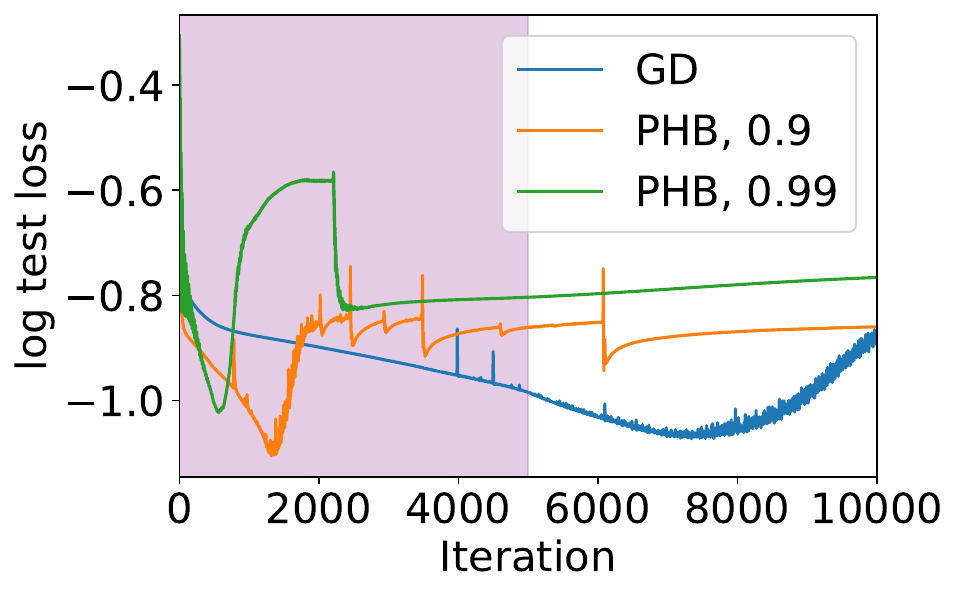}
    \hfill
    \includegraphics[width=0.31\textwidth]{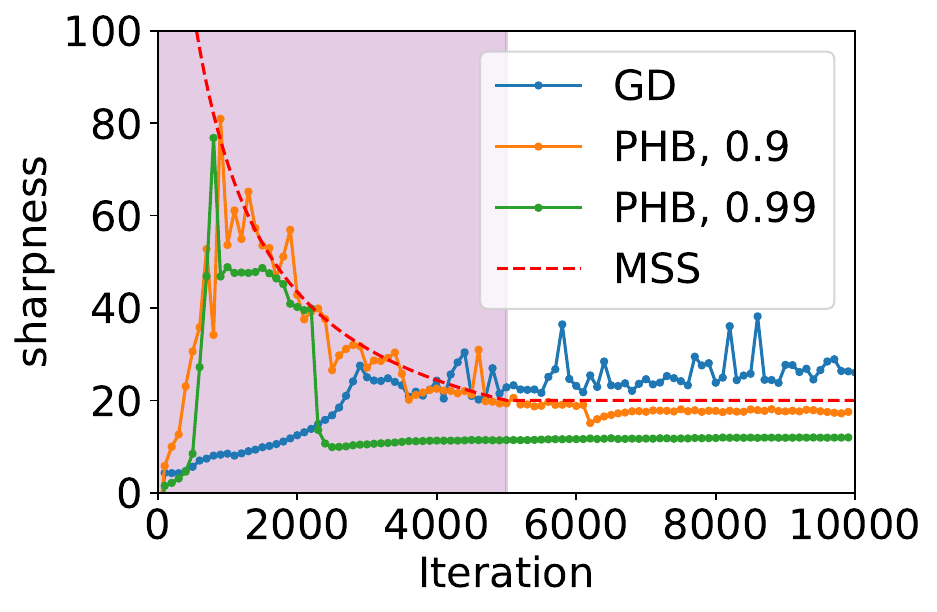}
    }
    \caption{ResNet20 Experiments with $\beta=0.99$}
    \label{fig:resnet20-beta0.99}
\end{figure}

\clearpage
\subsection{Details for Sharpness Displacement Lower bound Experiment for LDN}
\label{sec:LDNLBdetails}

In Section~\ref{sec:ldn-hypothesis}, we discussed extending Theorem~\ref{thm:toy} to more general loss functions, and presented a numerically computed ``lower bound'' on the sharpness decrease for a simple LDN loss $\gL(u,v) = \frac{1}{2}(u^2-v^2-1)^2$. This section provides more details of this process.

For $\gL(u,v) = \frac{1}{2}(u^2-v^2-1)^2$, its gradient and Hessian are given as
\begin{equation}
\label{eq:LDNlossgH}
    \nabla \gL(u,v) = 
    \begin{bmatrix}
    2(u^2-v^2-1)u \\ -2(u^2-v^2-1)v
    \end{bmatrix},\quad
    \nabla^2 \gL(u,v) = 
    \begin{bmatrix}
        6u^2 - 2v^2-2 & -4uv \\
        -4uv & 6v^2 - 2u^2 + 2
    \end{bmatrix}.
\end{equation}
Now consider running gradient flow (GF), whose dynamics is given as
\begin{equation*}
    \begin{bmatrix}
        \dot u(t) \\ \dot v(t) 
    \end{bmatrix}
    =
    - \nabla \gL(u(t),v(t)),
\end{equation*}
starting from $(u(0),v(0))$. From the gradient values, it can be checked that $u(t)v(t)$ stays constant throughout the GF trajectory:
\begin{equation*}
    \frac{d}{dt} (u(t) v(t)) = u(t) \dot v(t) + \dot u(t) v(t) = 0.
\end{equation*}
Hence, the GF trajectory satisfies $u(t) v(t) = u(0) v(0)$ for all $t$, from which we can exactly calculate the solution of GF.

For simplicity, we focus on the case $u(0) > 0$ and $\gL(u(0), v(0)) < 1/2$. In this case, since $\gL(u(t),v(t))$ is always non-increasing along the trajectory, the trajectory has to stay in the region $u(t) > 0$ forever. In such a case, the limit of GF is given by solving
\begin{equation*}
    u(\infty)^2 - \frac{u(0)^2 v(0)^2}{u(\infty)^2} = 1,
\end{equation*}
which amounts to the solution
\begin{equation}
\label{eq:LDNGFsol}
    u(\infty) = \sqrt{\frac{1+\sqrt{1+4u(0)^2v(0)^2}}{2}},~~
    v(\infty) = \frac{u(0) v(0)}{u(\infty)}.
\end{equation}

Now, consider a global minimum $\bm\theta_* = (u_*, v_*)$ of $\gL(u,v)$. The minimum necessarily satisfies $u_*^2 = v_*^2 + 1$. Substituting this to the loss Hessian~\eqref{eq:LDNlossgH} gives
\begin{align*}
    \nabla^2 \gL(u_*,v_*) 
    &= 
    \begin{bmatrix}
        4v_*^2 + 4 & -4v_* \sqrt{v_*^2+1} \\
        -4v_* \sqrt{v_*^2+1} & 4v_*^2
    \end{bmatrix}\\
    &=
    (8v_*^2 + 4)
    \begin{bmatrix}
        \frac{\sqrt{v_*^2+1}}{\sqrt{2v_*^2+1}}\\
        -\frac{v}{\sqrt{2v_*^2+1}}
    \end{bmatrix}
    \begin{bmatrix}
        \frac{\sqrt{v_*^2+1}}{\sqrt{2v_*^2+1}} &
        -\frac{v}{\sqrt{2v_*^2+1}}
    \end{bmatrix}
\end{align*}
from which we can see that
\begin{equation*}
S(\bm\theta_*) = 8v_*^2 + 4,\quad
\vw_{\max} (\bm\theta_*) = \begin{bmatrix}
        \frac{\sqrt{v_*^2+1}}{\sqrt{2v_*^2+1}}\\
        -\frac{v}{\sqrt{2v_*^2+1}}
    \end{bmatrix},
\end{equation*}
which are the key quantities need for the calculation of $C_u$ and $C_v$~\eqref{eq:CuCv2}.

Based on this background, we draw Figure~\ref{fig:ldn-momentum-on-off} using the following procedure. We start GD/PHB from $(u_0,v_0)$, whose GF solution $(u_0^*,v_0^*)$ has sharpness $\frac{2+\epsilon}{\eta}$. Specifically, we choose $\eta = 0.01$ and $\epsilon = 0.004$, and initialize at $(u_0,v_0) \approx (5.060, 4.950)$. Every time we update the iterates to get $\bm\theta_t = (u_t,v_t)$, we calculate the corresponding GF solution $\bm\theta_t^* = (u_t^*,v_t^*)$ using \eqref{eq:LDNGFsol}. From there, we calculate the sharpness, and see if $S(\bm\theta_t^*) < \frac{2-\epsilon}{\eta}$; let $\tau_u$ be the first time step $t$ that $S(\bm\theta_t^*) < \frac{2-\epsilon}{\eta}$ happens; from this, we can calculate $C_u$ and $C_v$ as defined in \eqref{eq:CuCv2} until convergence. In our experiments, we observed that $S(\bm\theta_t^*)$ was monotonically non-increasing, so there was no subtlety involved in calculating $C_u$ and $C_v$. The monotone decrease of sharpness of GF solution is consistent with \citet{kreisler2023gradient}.

\end{document}